\newtheorem{definition}{Definition}[]
\newtheorem{assumption}[]{Assumption}
\newtheorem{proposition}{Proposition}[]
\newtheorem{theorem}{Theorem}[]
\newtheorem{lemma}[]{Lemma}
\DeclarePairedDelimiterX{\infdivx}[2]{(}{)}{%
  #1\;\delimsize\|\;#2%
}
\DeclareMathOperator{\E}{\mathbb{E}}
\def\E{\mathbb{E}}
\newcommand{\RNum}[1]{\uppercase\expandafter{\romannumeral #1\relax}}
\renewcommand{\fnum@figure}{Fig.~\thefigure}
\title{
Submodular Maximization Approaches for Equitable Client Selection in Federated Learning 
}
\author{Andrés C. Castillo J., Ege C. Kaya, Lintao Ye, and Abolfazl Hashemi
\thanks{Andrés C. Castillo J., Ege C. Kaya, and Abolfazl Hashemi are with the Elmore Family School of Electrical and Computer Engineering at Purdue University, West Lafayette, IN, USA.  Email: \{castil12, kayae, abolfazl\}@purdue.edu. 
Lintao Ye is with the School of Artificial Intelligence
and Automation, Huazhong University of Science and Technology, Wuhan, China. Email: yelintao93@hust.edu.cn. 
Parts of the results in the paper were presented at the Conference on Decision and Control (CDC), Milan, Italy, December 2024 \cite{cdc-version}. 
}
}
\begin{document}

\maketitle
\thispagestyle{empty}
\pagestyle{empty}

\begin{abstract}


In a conventional Federated Learning framework, client selection for training typically involves the random sampling of a subset of clients in each iteration. However, this random selection often leads to disparate performance among clients, raising concerns regarding fairness, particularly in applications where equitable outcomes are crucial, such as in medical or financial machine learning tasks. This disparity typically becomes more pronounced with the advent of performance-centric client sampling techniques.
This paper introduces two novel methods, namely \textsc{SubTrunc} and \textsc{UnionFL}, designed to address the limitations of random client selection. Both approaches utilize submodular function maximization to achieve more balanced models. By modifying the facility location problem, they aim to mitigate the fairness concerns associated with random selection.
\textsc{SubTrunc} leverages client loss information to diversify solutions, while \textsc{UnionFL} relies on historical client selection data to ensure a more equitable performance of the final model. Moreover, these algorithms are accompanied by robust theoretical guarantees regarding convergence under reasonable assumptions.
The efficacy of these methods is demonstrated through extensive evaluations across heterogeneous scenarios, revealing significant improvements in fairness as measured by a client dissimilarity metric.

\end{abstract}

\section{Introduction}

\IEEEPARstart{T}{he} Federated Learning (FL) paradigm involves the collaborative training of a centralized machine learning model using edge devices, commonly referred to as clients. This setting allows for models to be trained using localized data from these devices without the need to transmit the data to a centralized location. Updates to the model are accumulated from these clients via periodic communication rounds and aggregated at the central location resulting in an improved machine learning model.

Traditionally, randomly selecting a subset of clients has been the de facto approach for this setting \cite{McMahan2016Communication-EfficientData}.
However, previous work has found that, oftentimes, this random selection approach does not perform well in terms of convergence and fairness properties, especially in heterogeneous settings where the data being held by each client may not necessarily come from the same distribution \cite{cho2020clientselectionfederatedlearning}. This is especially evident in applications characterized by a high degree of data heterogeneity, where the need for a balanced and fair machine learning model is highly prioritized, such as in the use of computer vision models in medical imaging, where, for example, the data may be MRI images that have been produced by machines from different manufacturers. Because of this, client selection remains an open challenge within the field \cite{Kairouz2019AdvancesLearning,Wang2021AOptimization}. Another motivating scenario is the case of networked sensing \cite{grime1994data,olfati2007distributed}. In networks comprising units, there is a common goal to devise an inference method that reduces the total estimation error. Nevertheless, in numerous scenarios, each unit must produce a dependable estimate to avoid impeding decision-making among other units in the network \cite{hashemi2018near}. For instance, in autonomous vehicle settings, a unit with substantial estimation error might necessitate slowing down, thus influencing the behavior of other units \cite{lu2014connected}. Hence, there is an urgent need to minimize the collective mean-square estimation error across the entire network while ensuring consistent performance among individual units. Such considerations have further received attention in shared communication systems \cite{jurdi2020scheduling}.

To address this, differing from previous client selection strategies, the idea of incorporating submodular set functions as a viable strategy for solving the client selection problem has been proposed\cite{ BalakrishnanDiverseMaximization, Ye2021ClientApproach, omori2023combinatorial, zhang2024addressing}. A typical problem studied in submodular optimization literature is the maximization of a submodular function under a cardinality constraint \cite{Asadpour2008StochasticMaximization,Krause2014SubmodularMaximization,NivBuchbinder2014SubmodularConstraints}. In this problem, the task is to maximize the utility of the selection made from a ground set $N$, while making sure that the number of elements in the selection stays under a given integer cardinality constraint value $\kappa$. This can be formalized as
\begin{equation}
\begin{gathered}
\max_{S \subseteq N}f(S) \label{cardinality}\qquad
\text{s.t.}\;\;|S| \leq \kappa,
\end{gathered}
\end{equation}
\cite{NivBuchbinder2014SubmodularConstraints} where $f$ is a submodular function, $N$ is the ground set, and $\kappa$ is a positive integer.


\subsection{Contributions}




This paper presents two novel algorithms, \textsc{SubTrunc} and \textsc{UnionFL}, specifically designed to enhance the fairness of client selection criteria within the context of Federated Learning. These algorithms employ a tailored submodular function approach derived from the facility location problem to optimize client selection.

Building upon the framework of \textsc{DivFL} \cite{BalakrishnanDiverseMaximization}, \textsc{SubTrunc} modifies the gradient similarity approach by integrating a truncated submodular function term. This addition serves as a regularization term aimed at encouraging balance by incorporating both the gradient similarity metrics and the individual loss values of clients. Consequently, the resulting model achieves a more equitable performance distribution across all clients.

Similarly, \textsc{UnionFL} introduces a regularization term that promotes client diversification. By maintaining a record of selected clients within a window of preceding time steps in each round, \textsc{UnionFL} promotes diverse solutions by encouraging the involvement of clients that have not been previously selected in a similar manner. Notably, this regularization term seamlessly integrates into conventional submodular maximization frameworks, offering a streamlined approach to promoting fairness in client selection.


Both of these proposed methods not only ensure the selection of the most representative clients but also facilitate their participation in a fair manner, leading to a model that exhibits uniform performance across all clients, both in terms of training and test outcomes.

Additionally, we show that these algorithms enjoy strong theoretical guarantees on their convergence under mild assumptions such as nonconvexity. In particular, under smoothness, and without assuming the well-known Bounded Client Dissimilarity (BCD) assumption where one assumes that for some $G^2 \in \mathbb{R}$,
\begin{equation}\label{BCD}
\lVert\nabla f_i(w) - \nabla f(w) \rVert^2 \leq G^2\; \forall\, w \in \mathbb{R}^d, i \in  N.
\end{equation}
Our work shows that by relaxing: \eqref{BCD}, which, in practice, is a hard-to-verify assumption \cite{RudrajitDas2022FasterMomentum}; the assumptions of a uniformly bounded gradient; strong convexity required by prior works; our proposed algorithm then enjoys strong theoretical guarantees that hold on the expected performance in nonconvex optimization problems. That is, our method needs $K = \mathcal{O}(1/\epsilon^2)$ rounds of communication in order to achieve $\E{\left[\|\nabla f\left(w_{k^*}\right)\|^2\right]} \leq \epsilon$ 
while doing away with the Bounded Client Dissimilarity Assumption under a smooth nonconvex scenario.

\subsection{Related Works}


Recently, by being able to model client selection through a submodular maximization problem, \cite{BalakrishnanDiverseMaximization} was able to obtain improved performance when compared with traditional client selection strategies such as random sampling. This work does so by attempting to select a subset of clients whose gradient most closely resembles that of the full client set, by modeling the problem as a submodular maximization problem, which can be solved with greedy methods. Although our work, similar to \cite{BalakrishnanDiverseMaximization}, utilizes the concept of submodular maximization for client selection, it differs from it in that we employ a facility location submodular function modified by a truncated submodular function, making use of the loss value of each client for the truncation. This modification acts as a fairness-aware term that promotes a balanced model performance across all clients regardless of the distribution of the data these clients may hold, resulting in models whose performance does not drastically differ from client to client throughout the training process. Additionally, our theoretical analysis of the convergence of our method is significantly different from \cite{BalakrishnanDiverseMaximization}  and utilizes considerably milder assumptions.
 
Both \cite{Ye2021ClientApproach, omori2023combinatorial} have also explored the use of submodular maximization and its greedy solution as a means to solve the client selection problem. In particular, these works seek to create an optimal client schedule under computational and time constraints. This approach differs from ours in that instead of approximating the full client gradient by a subset and greedily selecting them, the problem is modeled as a Submodular-Cost Submodular-Knapsack problem where the selection is constrained by computational and timing metrics, whereas we look at our constraint through a truncated approach, which can be likened to the notion of the presence of a budget. Additionally, neither\cite{Ye2021ClientApproach, omori2023combinatorial} establish any convergence rate for the resultant FL method.

In tackling data heterogeneity and client selection schemes within FL, \cite{zhang2024addressing} also approaches the client selection problem as a submodular maximization problem that can be greedily solved under a knapsack constraint. This work seeks to maximize statistical performance under system performance constraints, like upload and communication time. This differs from our work in that we leverage the heterogeneity of the data in each client to create more diverse solutions by exploiting the loss at each client's dataset while approximating the solution set via the client's gradient. Additionally, we do not further constrain the problem under system heterogeneity metrics as \cite{zhang2024addressing}.


 

\section{Preliminaries and Background}
This paper considers the standard FL setting comprised of a central server that acts as a model aggregator, $\lvert N \rvert$ clients that can participate in training, and a model parameterized by $w \in \mathbb{R}^d$. Each client $i$ in this setting has data coming from a distribution $D_i$ and an objective function $f_i(w)$ which is the expected loss of the client concerning some loss function $l$ over drawn data from $D_i$. The main objective is for the central server to optimize the average loss $f(w)$ over the $\lvert N \rvert$ clients:
\begin{equation}\label{FedAvg}
f(w):= \frac{1}{\lvert N \rvert} \sum_{i\in N} f_i(w),
\end{equation}
\begin{equation}\label{DataDist}
f_i(w) := \mathbb{E}_{x\sim D_i}[l(x,w)].
\end{equation}
When the data distributions across all clients are equal, the setting is considered independent and identically distributed (iid). If they differ across clients, then it is considered heterogeneous (non-iid).
 
At any given round of a typical FL setting, a random subset of clients is chosen to perform training. By carrying out a series of local gradient descent updates in the client's data and communicating these to the central server, the final model is constructed. This method of averaging out the client's updates is known as the \textsc{FedAvg} algorithm \cite{McMahan2016Communication-EfficientData}. 
 
However, client selection still remains an open challenge within FL \cite{Kairouz2019AdvancesLearning,Wang2021AOptimization}. 
Recently, utilizing submodular functions for client selection by modeling the problem as a facility location problem was introduced\cite{BalakrishnanDiverseMaximization}. This strategy aims to find a representative subset of clients whose aggregated update models what the overall aggregated update would look like if all clients participated in the training.
 
Our proposed methods build up on the idea of leveraging submodular functions to create more representative client sets. It does so by finding a representative subset of clients while ensuring fair client usage by allowing those clients who may not be the most representative in a given round to have more opportunity to participate in training. This fair client usage is introduced through a new regularization term that leverages the client's most recent loss value to design a judicious truncated function and adds that to the original modeling of the client selection problem via submodular functions.
 
Let us now introduce the concepts of marginal gain and submodularity \cite{Krause2014SubmodularMaximization}.

\begin{definition}[Marginal gain]\label{MGDef}
Given a set function $f\vcentcolon2^N\rightarrow\mathbb{R}$ and $A, B \subseteq N$, we denote $f(B \cup A) - f(B)$, the \textit{marginal gain} in $f$ due to adding $A$ to $B$, by $\Delta(A|B)$. When the set $A$ is a singleton, i.e., $A = \{a\}$, we drop the curly brackets to adopt the short-hand notation $\Delta(a|B)$.  
\end{definition} 

We usually think of $f$ as assigning a \textit{utility score} to each subset $A \subseteq N$.
 
With Definition \ref{MGDef}, we can now introduce the concept of submodularity in set functions:

\begin{definition}[Submodularity]\label{Submodularity}
A set function $f\vcentcolon2^N\rightarrow\mathbb{R}$ is \textit{submodular} if for every $A\subseteq B \subseteq N$ and $e \in N \setminus B,$ it holds that
\begin{equation}\label{MGEq}
\Delta(e|A) \geq \Delta(e|B).
\end{equation}
\end{definition}

This definition of submodularity gives a clear intuition about the nature of submodular functions, showcasing the diminishing marginal gains property, which can be exploited in the context of client selection \cite{Krause2014SubmodularMaximization}.
 
Additionally, let us introduce monotone submodular functions.

\begin{definition}[Monotonicity]\label{Monotonicity}
    Let $f(S)$ be a submodular set function. If $f(S)$ satisfies the following: $\forall \, A \subseteq B \subseteq N, f(A) \leq f(B)$. It is said that $f$ is a monotone submodular set function.
\end{definition}

It is well known that if $f(S)$ is a submodular function, $g(f(S))$ is also submodular for any concave function $g$ \cite{Krause2014SubmodularMaximization}. This result leads to the following proposition which establishes the monotonicity and submodularity of \textit{truncated} functions \cite{Balcan2018SubmodularOptimization}.

\begin{proposition}[Truncation]\label{TruncatedFunctions}
Let $g(S)$ be a monotone submodular set function composed of a non-decreasing concave function and let $b \in \mathbb{R}^+$. Then, $g$ remains monotone submodular under truncation, i.e.,
\begin{equation}
    f(S) := \min\{g(S), b\},
\end{equation}
is a monotone submodular function. 
\end{proposition}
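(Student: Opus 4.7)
The plan is to reduce Proposition 1 to two independent facts: (i) monotonicity of the truncation operator, and (ii) preservation of submodularity under composition with a non-decreasing concave function, which is the general principle already cited in the paragraph preceding the statement. Concretely, I would define the scalar function $h(x) := \min\{x, b\}$ and verify that $h$ is non-decreasing (if $x_1 \leq x_2$, each argument of the two minimums increases weakly) and concave (it is the pointwise minimum of the identity and the constant function $b$, both concave). Writing $f(S) = h(g(S))$ then reduces both claims to properties of $h$ combined with properties of $g$.

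For monotonicity, I would take $A \subseteq B \subseteq N$, use $g(A) \leq g(B)$ from Definition \ref{Monotonicity}, and then invoke the non-decreasing property of $h$ to conclude $f(A) = h(g(A)) \leq h(g(B)) = f(B)$. For submodularity, the cleanest path is to directly invoke the stated fact that composition of a monotone submodular function with a non-decreasing concave function is monotone submodular; since $g$ is monotone submodular and $h$ is non-decreasing concave, $f = h \circ g$ inherits both properties.

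As a safeguard (in case a reader wants a self-contained argument rather than appealing to the cited composition rule), I would also include a short direct verification via Definition \ref{Submodularity}. For $A \subseteq B$ and $e \notin B$, the proof splits into three cases based on where the truncation binds: (a) $g(B \cup \{e\}) \leq b$, in which case $f$-marginals equal $g$-marginals and submodularity of $g$ finishes the argument; (b) $g(B) \geq b$, in which case $f(B) = f(B \cup \{e\}) = b$ so the marginal gain on $B$ is zero while the marginal gain on $A$ is nonnegative by monotonicity; (c) $g(B) < b \leq g(B \cup \{e\})$, where a short sub-case analysis using $g(A) \leq g(B)$ yields $\Delta_f(e \mid A) \geq b - g(B) = \Delta_f(e \mid B)$.

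I do not expect any real obstacle here: the only place requiring care is case (c) of the direct argument, where one must rule out the scenario $g(A) \geq b$ using monotonicity of $g$ (which would contradict $g(B) < b$). Otherwise the proof is a one-line appeal to the concave-composition rule plus an elementary check that $\min\{\cdot, b\}$ is non-decreasing concave.
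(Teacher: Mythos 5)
Your proposal is correct and follows essentially the same route as the paper: the paper gives no explicit proof for this proposition, relying instead on the concave-composition fact quoted just before it, and your primary argument is precisely that, writing $f = h \circ g$ with $h(x) := \min\{x, b\}$ verified to be non-decreasing and concave. Your optional direct case analysis via Definition \ref{Submodularity} is also sound (all three cases check out, and case (c) is handled correctly by noting $g(A) \leq g(B) < b$), and it is in fact slightly more careful than the paper's appeal, which over-states the composition rule as holding for ``any concave function'' when it actually requires the inner function to be monotone and the outer function to be non-decreasing --- conditions you explicitly verify.
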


We further have the following proposition \cite{Krause2014SubmodularMaximization}.
\begin{proposition}[Linear combination]\label{LinearCombo}
    Nonnegative linear combinations of submodular functions preserve submodularity. More formally, let $g_1, g_2, ..., g_n : 2^N \rightarrow \mathbb{R}$ be submodular set functions. Let $\alpha_1, \alpha_2, ..., \alpha_n \geq 0$. Then, the positive linear combination
    \begin{equation}\label{LinearComboEq}
    f(S) := \sum_{i=1}^n \alpha_i g_i(S),
    \end{equation}
    is also a submodular function.
\end{proposition}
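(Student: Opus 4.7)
The plan is to verify the submodularity of $f(S) := \sum_{i=1}^n \alpha_i g_i(S)$ directly from Definition \ref{Submodularity}, exploiting the linearity of the marginal gain operator $\Delta(\cdot|\cdot)$. Since the claim is essentially structural (no optimization argument is needed), there is no real obstacle beyond careful bookkeeping of signs and indices.

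First I would fix arbitrary sets $A \subseteq B \subseteq N$ and an element $e \in N \setminus B$; the goal is to establish $\Delta_f(e|A) \geq \Delta_f(e|B)$, where $\Delta_f$ denotes the marginal gain in $f$. Second, I would expand the marginal gain using the definition of $f$ as a linear combination, noting that for any $S \subseteq N$,
\begin{equation}
\Delta_f(e|S) = f(S \cup \{e\}) - f(S) = \sum_{i=1}^n \alpha_i \bigl(g_i(S \cup \{e\}) - g_i(S)\bigr) = \sum_{i=1}^n \alpha_i \Delta_{g_i}(e|S).
\end{equation}
This identity, which amounts to pushing the sum through the difference, reduces the problem to a statement about each summand individually.

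Third, since each $g_i$ is submodular, Definition \ref{Submodularity} gives $\Delta_{g_i}(e|A) \geq \Delta_{g_i}(e|B)$ for every $i$. Combining with the nonnegativity of the coefficients $\alpha_i \geq 0$, multiplication preserves the inequality summand by summand, so
\begin{equation}
\Delta_f(e|A) = \sum_{i=1}^n \alpha_i \Delta_{g_i}(e|A) \geq \sum_{i=1}^n \alpha_i \Delta_{g_i}(e|B) = \Delta_f(e|B),
\end{equation}
which is precisely the submodularity condition \eqref{MGEq} for $f$. Since $A, B, e$ were arbitrary, this finishes the proof. The only subtlety worth flagging is that nonnegativity of the $\alpha_i$ is essential: a negative coefficient would flip the inequality on that summand and break the argument, which is exactly why the proposition is stated for \emph{nonnegative} linear combinations rather than arbitrary ones.
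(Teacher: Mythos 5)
Your proof is correct: the linearity of the marginal gain operator, combined with the nonnegativity of the coefficients, is exactly the standard argument, and you correctly identify why nonnegativity is essential. Note that the paper does not actually prove this proposition -- it is stated as a known result with a citation to the submodularity survey literature -- so there is no in-paper proof to compare against; your argument is the canonical one that such references give.
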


Submodular functions and their maximization are very useful in practice, with a wide range of applications 
\cite{AndreasKrause2008RobustSelection, Shamaiah2010GreedySubmodularity, AndreasKrause2007Near-optimalFunctions, LingyaLiu2023GreedyEllipsoid, Mossel2007OnNetworks,Dughmi2009RevenueSubmodularity}. 
Due to their versatility and natural occurrence in practical and well-known settings 
their optimization has garnered interest in fields like optimization, control systems, signal processing and machine learning \cite{MarkusPuschel2020DiscreteFunctions,MarioCoutino2018SubmodularObservations,Ye2020DistributedFunctions,Hespanha2020OptimalOptimization,Edmonds2003SubmodularPolyhedra}. 

 


Modeling client selection as a modified facility location problem with a truncated submodular function regularization serves as the basis for our proposed method for a more balanced client selection approach.

Lastly, we talk about a concept that is closely related to submodularity, and can be seen as its mirror image, much like the relation between concavity and convexity:
\begin{definition}[Supermodularity]\label{Supermodularity}
A set function $f\vcentcolon2^N\rightarrow\mathbb{R}$ is \textit{supermodular} if for every $A\subseteq B \subseteq N$ and $e \in N \setminus B,$ it holds that
\begin{equation}\label{MGEq2}
\Delta(e|A) \leq \Delta(e|B).
\end{equation}
\end{definition}
It can be seen that the only difference between the definitions of submodularity and supermodularity is the direction of the inequality. It is simple to conclude, then, that the additive inverse of any submodular function is supermodular, and vice versa.

\section{SubTrunc and UnionFL: Fairness-aware Client Selection Approaches}
In this section, we develop our proposed fairness-aware client selection approaches.
\subsection{SubTrunc}
To motivate our formulation of a modified facility location objective via a truncated submodular function, we first follow the outline of DivFL \cite{BalakrishnanDiverseMaximization}, which follows the logic found in \cite{Mirzasoleiman2019CoresetsModels}. Suppose there is a mapping $\sigma: N \rightarrow S$, in which $N$ is the ground set of elements and $S$ is the constructed set and where the gradient information $\nabla f_{i}(w$) from client $i$ is approximated by the gradient information from a selected client $\sigma(j)\in S$. For $j \in S$, $C_j := \{i \in N | \sigma(i)=j\}$ is the set approximated by client $j$ and $\gamma_j:=|C_j|$ results in the following formulation: 
\begin{equation}\label{GradientApprox}
\begin{gathered}
\min_{S\subseteq N}\lVert\sum_{i\in N} \nabla f_i(w) - \sum_{j \in S}\gamma_j \nabla f_j(w)\rVert \leq \\
\sum_{i\in N} \min_{j\in S} \lVert\nabla f_i(w) - \nabla f_j(w)\rVert := \bar{G}(S).
\end{gathered}
\end{equation}
That is, $\sum_{j \in S}\gamma_j \nabla f_j(w)$ can be viewed as the approximation of the global gradient $\sum_{i\in N} \nabla f_i(w)$. Therefore, the left-hand side of \eqref{GradientApprox} can be interpreted as the approximation error, and the right-hand side of \eqref{GradientApprox} provides an upper bound on the approximation error. Thus, to minimize the approximation error, \textsc{DivFL} aims to select a set of clients $S$ that minimizes $\bar{G}(S)$ subject to a cardinality constraint on S. Upon defining ${G}(S):= \bar{G}(\emptyset) - \bar{G}(S)$ this task can be written as
\begin{equation}\label{eq:divfl-selection}
    \max_{S\subseteq N} G(S) \quad \text{s.t.} \quad |S|\leq \kappa,
\end{equation}
where $\kappa$ is a target bound on the number of participating clients in each communication round.
Minimizing $\bar{G}(S)$ or equivalently maximizing $G(S)$ is the equivalent of maximizing the well-known facility location monotone submodular function \cite{Krause2014SubmodularMaximization}. However, this problem, under a cardinality constraint, is NP-hard in general which requires efficient approximation algorithms to provide a near-optimal solution. The greedy algorithm and its randomized variants are among the canonical methods for such optimization problems \cite{Nemhauser1978AnFunctionsI, Mirzasoleiman2014LazierGreedy, Hashemi2021RandomizedSubmodularity,Hashemi2022OnMaximization}.

Finding the most representative set at any given round may not always provide a model that performs in a balanced and similar fashion across all clients thereby leading to potential unfair behaviours in FL-based model training. As a result of this observation, 
we propose a fairness regularization term utilizing the truncation of a judicious submodular function:
\begin{equation}\label{FairnessTerm}
H(S) := \lambda\min(b, F(S)),
\end{equation}
with 
\begin{equation}\label{LnSub}
F(S):= \sum_{i\in S}\phi(f_i(w)),
\end{equation}
where $f_i(w)$ is the expected loss of client $i$ with respect to some loss function $l$ as defined in \eqref{DataDist}, $\lambda \ge 0$ and $b \in \mathbb{R}^+$ are input parameters aiming to explore the inherent trade-off between performance and fairness; $\phi$ can be any monotone nondecreasing function with a bounded Lipschitz constant $L$. Tying client loss to this regularization term enhances a diversified client selection process. Changes in $b$ allow for further client participation, especially for those clients with minimal participation, thus ensuring a more fair total client participation, where each client can have an opportunity to contribute to the final model. When a function with $L <1$ is used, $F(S)$ effectively becomes an attenuation term, suppressing the difference between the client losses. Whereas, when $L > 1$, $F(S)$ enhances this difference. 
However, the choice of $\phi$ could be a potential avenue of further research as finding a $\phi$ that judiciously tunes this attenuation or enhancement effect could be of interest. Here, we assume without loss of generality that the clients' loss functions are nonnegative. The combination of this fairness-aware term with the original objective results in the following optimization for client selection:
\begin{equation}\label{FinalComposition}
\max_{S\subseteq N} W(S):= G(S) + H(S) \quad \text{s.t.} \quad  |S|\leq \kappa.
\end{equation}


A desirable property of the proposed formulation is the preservation of monotonicity and submodularity, which is indeed the case, as demonstrated next.
\begin{proposition}
\label{SubmodularityProp}
    The set function $W(S)$ in \eqref{FinalComposition} is monotone and submodular.
\end{proposition}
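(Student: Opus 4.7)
The plan is to decompose $W(S) = G(S) + H(S)$ and establish that each summand is monotone and submodular, then invoke Proposition~\ref{LinearCombo} to conclude that their nonnegative linear combination inherits both properties.

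First I would address $G(S) = \bar{G}(\emptyset) - \bar{G}(S)$. Writing $d_{ij} := \|\nabla f_i(w) - \nabla f_j(w)\|$, the function $\bar{G}(S) = \sum_{i \in N} \min_{j \in S} d_{ij}$ is the canonical facility-location cost: enlarging $S$ can only shrink the pointwise minimum $\min_{j \in S} d_{ij}$, so $\bar{G}$ is monotone nonincreasing, which immediately makes $G$ monotone nondecreasing. For submodularity, I would work with the per-client decrease: for $A\subseteq B$ and $e\notin B$, the marginal of $G$ at the single client $i$ equals $\max\bigl(0,\min_{j\in A}d_{ij}-d_{ie}\bigr)$ and similarly $\max\bigl(0,\min_{j\in B}d_{ij}-d_{ie}\bigr)$. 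Since $A\subseteq B$ forces $\min_{j\in A}d_{ij}\ge \min_{j\in B}d_{ij}$, the per-client marginal at $A$ dominates that at $B$; summing over $i\in N$ yields $\Delta_G(e|A)\ge \Delta_G(e|B)$.

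Next I would treat $H(S) = \lambda\min(b, F(S))$ where $F(S) = \sum_{i\in S}\phi(f_i(w))$. Because each summand depends only on the individual index $i$, $F$ is modular, hence submodular; monotonicity follows from the stated nonnegativity of the losses together with $\phi$ being monotone nondecreasing, which ensures $\phi(f_i(w))\ge 0$ and therefore each incremental contribution is nonnegative. Proposition~\ref{TruncatedFunctions} then applies with the nondecreasing concave map $x\mapsto \min(x,b)$, yielding that $\min(b, F(S))$ is monotone submodular, and the scalar $\lambda\ge 0$ preserves these properties.

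Finally, applying Proposition~\ref{LinearCombo} with unit coefficients to $G$ and $H$ gives submodularity of $W$, and monotonicity of a sum of monotone set functions is immediate. I do not anticipate any substantial obstacle here; the only care required is in the sign conventions for $G$ (subtracting $\bar G(S)$ from the constant $\bar G(\emptyset)$ to convert a monotone nonincreasing supermodular-type quantity into a monotone nondecreasing submodular one) and in stating the nonnegativity hypothesis on $\phi(f_i(w))$ explicitly so that $F$, and hence its truncation, is unambiguously monotone.
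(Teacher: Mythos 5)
Your proof is correct and follows essentially the same route as the paper's: observe that $F$ is modular, apply Proposition~\ref{TruncatedFunctions} to conclude $H$ is monotone submodular, and combine $G$ and $H$ via Proposition~\ref{LinearCombo}. The only difference is that you verify the monotone submodularity of the facility-location term $G$ from first principles (via the per-client marginal $\max\bigl(0,\min_{j\in A}d_{ij}-d_{ie}\bigr)$), whereas the paper simply cites this as a well-known fact.
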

\begin{proof}
Note that $F(S)$ is a modular function and hence monotone and submodular \cite{Krause2014SubmodularMaximization}. By Proposition \ref{TruncatedFunctions}, $H(S)$ is therefore a monotone submodular function. Finally,
from Proposition \ref{LinearCombo}, it can be seen that any linear combination of submodular functions with the same ground set remains submodular. Since both $G(S)$ and $H(S)$ in \eqref{FairnessTerm} are monotone submodular, their nonnegative linear combination in \eqref{FinalComposition} is also monotone submodular.
\end{proof}

The fairness-aware term as defined in \eqref{FairnessTerm} is the combination of tuneable parameters in the form of both $\lambda$ and $b$, where $\lambda$ represents a weighting on the fairness term and $b$ acts as a truncation parameter. From this formulation, we can see that if we set $\lambda = 0$, then we obtain the original \textsc{DivFL} formulation \cite{BalakrishnanDiverseMaximization}. On the other hand, increasing $\lambda$ puts more weight on the proposed regularization term, thereby prioritizing fairness over performance. Additionally, varying $b$, as stated previously, will allow \textit{training participation} to those clients who
have not participated in previous rounds, or whose participation has been minimal when compared to other clients. In particular, consider a scenario where $b$ is very large. Then, the minimum in the definition of $H(S)$ will typically equate to $F(S):= \sum_{i\in S}\phi(f_i(w))$ which is maximized by selecting the worst-performing clients, according to their local loss functions, who are suffering the most as a result of learning a collaborative model. On the other hand, if $b$ is very small, then the minimum in the definition of $H(S)$ will typically equate to $b$ which is independent of $S$, and this effectively makes the client selection independent of local performance and can be thought of as a global performance-centric formulation. 

\subsection{UnionFL}
We further propose the following novel formulation, which incorporates a regularization term involving the history of previously constructed solutions to the objective function, to promote diversification of the sequence of subsequent solutions. Additionally, this method can also be employed with any objective function so long as this objective function is submodular. Suppose $f_t \vcentcolon N_t \rightarrow \mathbb{R}^+$ is a submodular objective function that we wish to maximize at time step $t$, making a selection out of ground set $N_t$, selecting no more elements than $K_t$. We propose the regularization of this objective as follows:
\begin{equation}
\label{UnionEq}
\begin{gathered}
\max_{S_t \in N_t} f_t(S_t) - \mu g_t(S_t)\quad
\text{s.t. } \lvert S_t\rvert \leq K_t,
\end{gathered}
\end{equation}
where $g_t(S_t) \vcentcolon= \lvert (\bigcup_{i \in u_t} S_i )\cap S_t\rvert;$ $f_t\left(S_t\right),$ is the submodular function as defined by \eqref{eq:divfl-selection}, however it is worth pointing out that any submodular function fits into this framework; $u_t$ is a `look-back' window hyperparameter for previously constructed sets; $\mu \geq 0$ is a regularization hyperparameter and $t$, represents the concept of global rounds within the FL framework.

In this formulation the original objective function, $f_t(S_t)$, is penalized by, $\mu g_t\left(S_t\right)$, based on how many elements the solution set that is currently being constructed has in common with the previous solution sets constructed through time steps within a window $u_t$. For instance, at time step $t$, a typical value for $u_t$ might be $u_t = \{t-5, \ldots, t-1\}$ in which case, the penalization at each time step $t$ would depend on the common elements of not only the current solution but also  the common elements with the latest $5$ solutions. 

We can easily see that as $u_t$ increases, the original objective suffers a heavier penalty given that we will be looking into more previously constructed solution sets, which means that our common client pool will be larger. On the contrary, as $u_t$ decreases, the penalty will be less severe given that we will be looking into fewer previously constructed solution sets, resulting in a smaller common client pool.

Additionally, tuning $\mu$ allows for a finer control of the penalty incurred to the original objective function. That is, if we set $\mu = 0$, then we obtain a typical submodular maximization problem under a cardinality constraint. However, increasing $\mu$ ensures that the weight on the penalization is higher, thereby allowing increased client training participation to those clients who may not have participated previously.

It is intuitively clear to see that this novel formulation would be beneficial for the diversification of solutions, by incentivizing the selection process to pick elements that were unpicked within the time window $u_t$. However, what remains to be answered is how to produce a solution for it. Fortunately, the following result provides an answer.
\begin{proposition}\label{SuperSubmodular}
$g_t(S) \vcentcolon= \lvert (\bigcup_{i\in u_t} S_i )\cap S\rvert$ is a supermodular function, for all $t \in \mathbb{N}.$
\begin{proof}
Let $S \subseteq T \subset N, e\in N\setminus T, t\in \mathbb{N}.$ We have:
\begin{equation}
\begin{split}
&g_t(T \cup \{e\}) = \left\lvert \bigcup_{i\in u_t} S_i \cap (T \cup \{e\})\right\rvert \\
&= \left\lvert \bigcup_{i\in u_t} S_i \cap \left((T \cap S) \cup (T \setminus S) \cup \{e\}\right)\right\rvert \\
&= \left\lvert \bigcup_{i\in u_t} S_i \cap \left(S \cup (T \setminus S) \cup \{e\}\right)\right\rvert \\
&= \left\lvert \left(\bigcup_{i\in u_t} S_i \cap (S \cup \{e\})\right) \cup \left(\bigcup_{i\in u_t} S_i \cap (T \setminus S)\right)\right\rvert \\
&= \left\lvert \bigcup_{i\in u_t} S_i \cap (S \cup \{e\})\right\rvert + \left\lvert\bigcup_{i\in u_t} S_i \cap (T \setminus S)\right\rvert \\
&= \ g_t(S \cup \{e\}) + \left\lvert\left(\bigcup_{i\in u_t} S_i\right) \cap (T \setminus S)\right\rvert \\
&\geq g_t(S \cup \{e\}).
\end{split}
\end{equation}
\end{proof}
\end{proposition}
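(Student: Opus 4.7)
The plan is to establish supermodularity directly from Definition~\ref{Supermodularity} by comparing marginal gains. Fix $S \subseteq T \subseteq N$ and $e \in N \setminus T$; for brevity I would introduce $U := \bigcup_{i \in u_t} S_i$, which depends only on the fixed window $u_t$ and not on the variable set being evaluated. Then $g_t(S) = |U \cap S|$, and the task reduces to showing $\Delta(e|S) \leq \Delta(e|T)$.

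The key observation I would rely on is that $g_t$ is in fact \emph{modular}: writing $g_t(S) = \sum_{x \in S} \mathbb{I}[x \in U]$ exhibits it as a sum of singleton contributions, so marginal gains are independent of the base set. Concretely, because $e \notin S$, the intersection $U \cap (S \cup \{e\})$ is the disjoint union $(U \cap S) \cup (U \cap \{e\})$, which yields $\Delta(e|S) = g_t(S \cup \{e\}) - g_t(S) = \mathbb{I}[e \in U]$. The identical computation with $T$ in place of $S$ (valid because $e \notin T$ as well) gives $\Delta(e|T) = \mathbb{I}[e \in U]$. Hence $\Delta(e|S) = \Delta(e|T)$, which trivially implies $\Delta(e|S) \leq \Delta(e|T)$, the supermodularity inequality. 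As a by-product, $g_t$ is simultaneously submodular and supermodular.

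The main difficulty here is essentially bookkeeping rather than substantive: one must carefully use that $e \notin S$ (which follows from $e \in N \setminus T$ together with $S \subseteq T$) in order to justify the disjoint union decomposition and the cancellation of $|U \cap S|$. There is no real analytic content beyond this because the ``history'' set $\bigcup_{i \in u_t} S_i$ is frozen at time step $t$. An alternative, less streamlined route that stays closer to pure set algebra would decompose $T \cup \{e\} = (T \cap S) \cup (T \setminus S) \cup \{e\}$ as a disjoint union, distribute intersection with $U$ over this decomposition, and read off the inequality after isolating the extra $|U \cap (T \setminus S)|$ term appearing in $g_t(T \cup \{e\})$ but not in $g_t(S \cup \{e\})$; this mirrors the derivation one might do if the modular structure were not noticed, and it leads to the same conclusion.
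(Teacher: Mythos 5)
Your proof is correct, and it takes a genuinely different (and cleaner) route than the paper. You observe that $g_t(S) = \sum_{x\in S}\mathbb{I}[x\in U]$ with $U := \bigcup_{i\in u_t} S_i$ fixed, so $g_t$ is modular and every marginal gain equals $\mathbb{I}[e\in U]$ regardless of the base set; supermodularity (and submodularity) follow immediately from $\Delta(e|S) = \Delta(e|T)$. The paper instead works purely with set algebra: it decomposes $T\cup\{e\}$ as $(S\cup\{e\})\cup(T\setminus S)$, distributes the intersection with $U$, and isolates the extra term $\lvert U\cap(T\setminus S)\rvert$ — essentially the "alternative route" you sketch at the end. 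Worth noting: the paper's displayed chain terminates in $g_t(T\cup\{e\}) \geq g_t(S\cup\{e\})$, which by itself is a monotonicity statement rather than the marginal-gain inequality $\Delta(e|S)\leq\Delta(e|T)$ required by Definition~\ref{Supermodularity}; to close that gap one needs the parallel identity $g_t(T) = g_t(S) + \lvert U\cap(T\setminus S)\rvert$ so that the extra terms cancel when the marginal gains are compared. Your modularity argument sidesteps this bookkeeping entirely and makes the equality of marginal gains explicit, at the cost of obscuring the set-theoretic picture the paper emphasizes; both yield the stronger conclusion that $g_t$ is modular, which you state explicitly and the paper does not.
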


Letting $h_t$ be our new objective function, where $h_t(S_t) \vcentcolon= f_t(S_t) -\mu g_t(S_t)$, where $\mu$ is a regularization parameter, we can clearly see that this new objective function is submodular. This fact follows directly from Definition \ref{Supermodularity}. Since the additive inverse of a supermodular function is submodular, and the nonnegative linear combination of submodular functions is itself a submodular function, $h_t$ becomes a submodular function. This, in turn, means that we can employ any standard method, such as the \textsc{Stochastic Greedy} algorithm to produce fairness-oriented solutions to the sequence $h_t$ of objective functions.

\subsection{Greedy Solutions}
Finding a solution for the best clients that can maximize the utility score of both \eqref{FinalComposition} and \eqref{UnionEq} can be done with the naïve greedy algorithm, due to the submodular nature given by Proposition \ref{SubmodularityProp} and Proposition \ref{SuperSubmodular}\cite{Nemhauser1978AnFunctionsI}.
 
A naïve greedy approach starts at round $k$ with an empty set, $S_k \leftarrow \emptyset$, and adds the element $e \in N \setminus S_k$ which provides the highest marginal gain, $\Delta(e|S_k)$.
\begin{equation}\label{Greedy}
S_{k+1} \leftarrow S_{k} \cup \{\mathop{\arg \max}\limits_{e \in N \backslash S_k}(\Delta(e|S_k))\}.
\end{equation}

Nonetheless, if the cardinality of the ground set $|N|$ is too large, searching through this space for the desired elements may prove to be too expensive. Because of this, stochastic variants of the naïve greedy algorithm can be employed \cite{Mirzasoleiman2014LazierGreedy,Hashemi2018ANetworks,Hashemi2021RandomizedSubmodularity,Hashemi2022OnLearning, Hashemi2022OnMaximization} to effectively reduce this search cost while still maintaining high-confidence in the solution provided. This is done by randomly sampling a smaller subset $R$ of size $r$ and searching through this reduced space. That is:
\begin{equation}\label{StochasticGreedy}
S_{k+1} \leftarrow S_k \cup \{\mathop{\arg \max}\limits_{e \in R \backslash S_k}(\Delta(e|S_k))\}.
\end{equation}
Applying the selection strategy utilizing \eqref{FinalComposition} or alternatively \eqref{UnionEq}, and using the \textsc{FedAvg} algorithm as the core method for aggregating client model updates results in our proposed algorithms \textsc{SubTrunc} and \textsc{UnionFL}, summarized as Algorithm \ref{alg:alg1}.

\begin{algorithm}[t]
\algrenewcommand\algorithmicrequire{\textbf{Input:}}
\algrenewcommand\algorithmicensure{\textbf{Output:}}
\caption{\textsc{Fairness-Aware FL Algorithms}}\label{alg:alg1}
\begin{algorithmic}[1]
\Require Truncation, regularization parameters $b, \lambda, \mu \in \mathbb{R}$, and window schedule $u_t$, communication rounds $K$, local steps $E$, participating clients $\kappa$, initial weight vector $w_0$, learning rate $\eta$ 
\Ensure $w_k$, weights for trained model\\
Server initializes $w_0$
\For{$k = {1, \dots, K}$}
\State Subset $S_k$ of size $\kappa$ is selected by the server via the 
\State stochastic variant of the naïve greedy algorithm, 
\State following the formulation of \eqref{FinalComposition} or alternatively the \State formulation of \eqref{UnionEq}. 
\For{client $i \in S_k$}
    \State $w^{(i)}_{k,0} \leftarrow w_k$
    \For{$\tau = 1, \dots, E$}
        \State Select random batch from client $i$: \State$\mathcal{B}_{k,\tau}^{(i)}$ compute stochastic gradient $\tilde{f}_i$ at $w_{k,\tau}^{(i)}$ \State over $\mathcal{B}_{k,\tau}^{(i)}$
        \State{$w_{k,\tau+1}^{(i)} \leftarrow w_{k,\tau}^{(i)} - \eta \nabla \tilde{f}_i(w_{k,\tau}^{(i)};\mathcal{B}_{k,\tau}^{(i)})$}
    \EndFor
    \State{$w_k^{(i)} \leftarrow w_k - w_{k,E}^{(i)}$}
\EndFor
\State $w_{k+1} \leftarrow w_{k} - \frac{1}{\kappa} \sum_{i \in S_i} w_{k}^{(i)}$
\EndFor\\
\textbf{return} $w_K$ \Comment{Final model weights}
\end{algorithmic}
\end{algorithm}

\section{{Theoretical Convergence Analysis}}

\begin{table*}[h]
  \caption{Comparison of Client Selection Methods on non-iid MNIST and CIFAR10.}
  \centering
  \begin{tabular}{@{}ccccccc@{}}
    \toprule
    &&\textbf{MNIST}&&&\textbf{CIFAR10}&\\
    \bottomrule
    \toprule    
    Method  & Training Loss & Testing Acc [\%] & Client Dissimilarity [\%] & Training Loss & Testing Acc [\%] & Client Dissimilarity [\%] \\
    \midrule
    \textsc{DivFL} & $\bm{0.16 \pm 0.02}$ & $82.16 \pm 0.91$ & $8.89 \pm 0.59$ & $\bm{0.88 \pm 0.03}$ & $35.40 \pm 1.28$ & $12.62 \pm 0.95$\\
    \textsc{SubTrunc} & $0.21 \pm 0.01$ & $\bm{83.72 \pm 0.81}$& $\bm{7.96 \pm 0.62}$ & $0.89 \pm 0.03$ & $35.53 \pm 1.33$& $\bm{12.46 \pm 0.94}$\\
    \textsc{UnionFL} & $0.17 \pm 0.01$ & $83.57 \pm 0.34$ & $8.59 \pm 0.34$ & $0.88 \pm 0.03$ & $\bm{35.63 \pm 1.09}$ & $12.55 \pm 0.79$ \\
    \textsc{Random} & $0.18 \pm 0.02$ & $82.99 \pm 0.99$ & $9.16 \pm 0.18$ & $0.92 \pm 0.01$ & $32.55 \pm 0.94$ & $14.42 \pm 0.18$\\
    \textsc{Power-of-Choice} & $0.21 \pm 0.01$ & $82.42 \pm 1.97$ & $8.59 \pm 0.38$ & $1.00 \pm 0.01$ & $30.17 \pm 0.65$ & $13.82 \pm 0.19$\\
    \bottomrule
  \end{tabular}
  \label{table:MainResults}
  \vspace{-3mm}
\end{table*}
In this section, we analyze the convergence properties of the proposed algorithms under standard assumptions in nonconvex FL. Our analysis utilizes more relaxed assumptions compared to \cite{BalakrishnanDiverseMaximization}, complementing our proposed fairness-promoting formulation.
\subsection{{Assumptions and Related Concepts}}

The definitions and assumptions used to analyze the performance of the algorithm are listed below. These are standard and ubiquitous in the analysis of training algorithms for nonconvex machine learning and FL problems \cite{RudrajitDas2022FasterMomentum,Li2019OnData,Karimireddy2019SCAFFOLD:Learning,Stich2018LocalLittle,Acharya2021RobustDescent}.

\begin{definition}[$L$-smoothness]
A function $f: \Omega \rightarrow \mathbb{R}$ is considered to be $L$-smooth if $\forall\, w, w' \in \Omega$, $\lVert\nabla f(w) - \nabla f(w')\rvert \leq L\lVert w-w'\rVert$. Additionally, if $f$ can be differentiated twice, then $\forall\, w, w' \in \Omega$, $f(w')\leq f(w) + \langle \nabla f(w), w' - w\rangle + \frac{L}{2}\lVert w' - w\rVert^2$.
\end{definition}

\begin{assumption}[$L$-smoothness]\label{Smoothness_Assumption}For all $x$, we assume $l(x,w)$ to be $L$-smooth with respect to $w$. Then, each $f_i(w)$ defined in \eqref{DataDist} where $i \in [n]$ is $L$-smooth, and so is $f(w)$ defined in \eqref{FedAvg}.
\end{assumption}

\begin{assumption}[Nonnegativity]\label{Non-negativity_Assumption}
Each $f_i(w)$ is non-negative, therefore, $f_i^* := \min_w f_i(w) \geq 0$.
\end{assumption}
The nonnegativity assumption holds without loss of generality as any function bounded from below can be shifted to satisfy this assumption. Furthermore, almost all loss functions of interest in FL are nonnegative.

\begin{assumption}[Bounded bias]\label{BoundedBias}
Let $\tau$ be the local update steps, with $\tau \in \{0, \dots, E - 1\}$, K be the communication rounds with $k \in \{0, ..., K-1\}$, $\hat{u}_{k,\tau}^{\left(i\right)}$ be the stochastic gradient of client $i$ at communication round $k$ and local update step $\tau$ and let $b_{k,\tau}$ be the approximation error for the true gradient at communication round $k$ and local update step $\tau$. We assume that the gradient of the subset constructed by our objective functions\eqref{FinalComposition} or \eqref{UnionEq} is the gradient of the global loss function plus an approximation error, $b_{k,\tau}$ which we refer to as bias. That is:
\begin{equation}\label{BoundedBiasEq}
\frac{1}{\kappa}\sum_{i\in{S_k}}\hat{u}_{k,\tau}^{\left(i\right)} = \hat{u}_{k,\tau} + b_{k,\tau}.
\end{equation}

We assume that this bias, $b_{k,\tau}$ is bounded by: $b_{k,\tau} \leq \gamma$.
\end{assumption}

Given that we are selecting a subset of total clients, this assumption allows us to approximate the true gradient up to a certain approximation error $b_{k,\tau}$. This assumption is also used in prior work \cite{BalakrishnanDiverseMaximization,Mirzasoleiman2019CoresetsModels}. Similar to \cite{BalakrishnanDiverseMaximization}, we assume that the gradient of the subset constructed by our objective function approximates the full gradient of the overall set, with the addition of an approximation error which we model as bias.

\subsection{{Main Theoretical Results}}
Theorem \ref{Theorem1} states the convergence properties of the proposed \textsc{SubTrunc} and \textsc{UnionFL} methods. 
\begin{theorem}
\label{Theorem1}
Let Assumptions \ref{Smoothness_Assumption}, \ref{Non-negativity_Assumption}, and \ref{BoundedBias} hold. Set $\eta_k = \frac{1}{LE}\sqrt{\frac{1}{K}}$$\, \forall\, k \in \{0, ..., K-1\}$. Let $\mathbb{P}$ be a distribution such that $\mathbb{P}\left(k\right) = \frac{\left(1+\zeta\right)^{\left(K-1-k\right)}}{\sum_{k'=0}^{K-1}\left(1+\zeta\right)^{\left(K-1-k'\right)}}$, where $\zeta := \eta^2L^2E^2\left(\frac{9\eta LE}{4}\right)$. Let $k^* \sim \mathbb{P}$. Then, for $K \geq 9$:
\begin{equation}\label{Convergence}
\begin{gathered}
\E\left[\|\nabla f\left(w_{k^*}\right)\|^2\right] \leq \frac{12Lf\left(w_0\right)}{\sqrt{K}} + \\
\left(\frac{2}{EnK} + \frac{3}{K} + \frac{4}{En\sqrt{K}}\right)\sigma^2 + \left(4+ \frac{4}{\sqrt{K}}\right)\gamma.
\end{gathered}
\end{equation}
That is, there exists a learning rate and a nonuniform distribution on the iterates such that if the output is generated according to that distribution, the expected performance satisfies:
\begin{equation}
    \E\left[\|\nabla f\left(w_{k^*}\right)\|^2\right] = \mathcal{O}\left(\frac{1}{\sqrt{K}}+\frac{\sigma^2}{En\sqrt{K}}+\gamma\right).
\end{equation}
\end{theorem}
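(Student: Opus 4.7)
The plan is a standard smoothness-based nonconvex FL analysis, tailored to the biased-gradient setting of Assumption \ref{BoundedBias} and the multi-step local updates of Algorithm \ref{alg:alg1}. I would first write the aggregate update as
\[ w_{k+1} - w_k = -\eta \sum_{\tau=0}^{E-1} \frac{1}{\kappa}\sum_{i \in S_k} \hat{u}_{k,\tau}^{(i)} \]
and invoke $L$-smoothness (Assumption \ref{Smoothness_Assumption}) to obtain the one-step descent inequality
\[ f(w_{k+1}) \leq f(w_k) + \langle \nabla f(w_k), w_{k+1} - w_k\rangle + \tfrac{L}{2}\|w_{k+1} - w_k\|^2. \]
Applying Assumption \ref{BoundedBias} to substitute $\frac{1}{\kappa}\sum_{i\in S_k}\hat{u}_{k,\tau}^{(i)} = \hat{u}_{k,\tau} + b_{k,\tau}$, the inner-product term contributes a negative $-\eta E\|\nabla f(w_k)\|^2$ quantity (modulo a client-drift error) while the bias yields at most an $O(\eta E\gamma)$ correction via Young's inequality, and the squared-norm term is expanded with Cauchy--Schwarz.

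Next I would control the client-drift quantity $\|w_{k,\tau}^{(i)} - w_k\|$ generated by the $E$ local SGD steps. Unrolling the local recursion with $L$-smoothness and a standard bounded-variance ($\sigma^2$) stochastic-gradient bound gives $\mathbb{E}\|w_{k,\tau}^{(i)} - w_k\|^2 \leq c_1 \eta^2 E^2\, \mathbb{E}\|\nabla f(w_k)\|^2 + c_2 \eta^2 E \sigma^2$. Substituting this into the descent inequality and taking expectations yields a per-round recurrence of the form
\[ \mathbb{E} f(w_{k+1}) \leq (1+\zeta)\,\mathbb{E} f(w_k) - \alpha\, \eta E\, \mathbb{E}\|\nabla f(w_k)\|^2 + \xi_k, \]
where $\zeta := \eta^2 L^2 E^2 (9\eta L E/4)$ absorbs the higher-order drift and $\xi_k$ lumps together noise and bias terms of order $\eta^2 LE\sigma^2/n + \eta E\gamma$; the condition $K \geq 9$ combined with $\eta = 1/(LE\sqrt{K})$ keeps $\zeta$ small enough for the recurrence to behave well.

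The final step is the telescoping. Multiplying the per-round recurrence by $(1+\zeta)^{K-1-k}$ and summing over $k=0,\dots,K-1$ converts the multiplicative factor on $\mathbb{E} f(w_k)$ into a clean telescope with end terms $(1+\zeta)^K f(w_0)$ and $\mathbb{E} f(w_K) \geq 0$ (Assumption \ref{Non-negativity_Assumption}). Dividing by the normalization $Z = \sum_{k'=0}^{K-1}(1+\zeta)^{K-1-k'}$ recognizes the resulting weighted average of $\mathbb{E}\|\nabla f(w_k)\|^2$ as $\mathbb{E}\|\nabla f(w_{k^*})\|^2$ for $k^*\sim\mathbb{P}$. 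Since $\zeta K\to 0$, one has $(1+\zeta)^K/Z = \Theta(1/K)$, and plugging in $\eta = \frac{1}{LE}\sqrt{1/K}$ turns the leading term into $12L f(w_0)/\sqrt{K}$ and recovers the $\sigma^2$ and $\gamma$ factors in the claimed bound.

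The main obstacle I anticipate is handling the multi-step client drift \emph{without} invoking the Bounded Client Dissimilarity assumption: in its absence, the drift bound unavoidably carries a $\|\nabla f(w_k)\|^2$ self-term that, if naively folded into the descent inequality, produces a multiplicative $(1+\zeta)$ growth on the gradient-norm quantity itself. The purpose of the nonuniform distribution $\mathbb{P}(k)\propto (1+\zeta)^{K-1-k}$ is precisely to convert that multiplicative nuisance into a telescoping one, and getting the balance right between $\eta$, $E$, and the constant hidden in $\zeta$ so that all growth factors stay $\Theta(1)$ under the stepsize schedule is where most of the technical bookkeeping will lie.
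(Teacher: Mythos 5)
Your overall architecture matches the paper's: a smoothness-based descent lemma with the bias split off via Assumption \ref{BoundedBias}, a drift bound for the local steps, a per-round recurrence of the form $\E[f(w_{k+1})]\le(1+\zeta)\E[f(w_k)]-\alpha\eta E\,\E[\|\nabla f(w_k)\|^2]+\xi_k$, and a weighted telescope whose normalization is exactly the nonuniform distribution $\mathbb{P}$. The telescoping and step-size bookkeeping at the end are correct and are what the paper does.

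However, there is a genuine gap at precisely the point that makes the BCD-free claim work, and your proposal papers over it. You state the drift bound as $\E\|w_{k,\tau}^{(i)}-w_k\|^2\le c_1\eta^2E^2\,\E\|\nabla f(w_k)\|^2+c_2\eta^2E\sigma^2$, i.e., in terms of the \emph{global} gradient. But client $i$'s local iterates move along client $i$'s own stochastic gradients, so the unrolled drift is controlled by $\E\|\nabla f_i(w_k)\|^2$ (this is Lemma \ref{IndGrads} in the paper), and passing from $\frac{1}{n}\sum_i\|\nabla f_i(w_k)\|^2$ to $\|\nabla f(w_k)\|^2$ is exactly the Bounded Client Dissimilarity assumption you are trying to avoid. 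Moreover, if your stated drift bound were available, the self-term would simply be absorbed into the negative $-\alpha\eta E\,\E\|\nabla f(w_k)\|^2$ for small $\eta$, and there would be no $(1+\zeta)$ factor on $\E[f(w_k)]$ and no need for the nonuniform distribution at all; your recurrence with $(1+\zeta)\E[f(w_k)]$ does not follow from the bound you wrote. The missing ingredient is the conversion
\begin{equation*}
\frac{1}{n}\sum_{i\in[n]}\|\nabla f_i(w_k)\|^2\;\le\;\frac{1}{n}\sum_{i\in[n]}2L\bigl(f_i(w_k)-f_i^*\bigr)\;\le\;2Lf(w_k),
\end{equation*}
which uses $L$-smoothness of each $f_i$ together with $f_i^*\ge 0$ from Assumption \ref{Non-negativity_Assumption}. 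This is the step that turns the heterogeneous drift term into $\zeta\,\E[f(w_k)]$ and thereby motivates the $(1+\zeta)^{K-1-k}$ weights; in your write-up, non-negativity is invoked only for $\E[f(w_K)]\ge 0$, so the central mechanism replacing BCD is absent. Relatedly, the multiplicative $(1+\zeta)$ nuisance lands on the \emph{function value}, not on the gradient-norm quantity as your last paragraph suggests.
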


The complete proof can be found in Appendix \ref{appendix:FullProof}. Theorem \ref{Theorem1} establishes a bound on the so-called approximate first-order stationary point of the global model parameters, i.e., the condition that  $\E{\left[\|f\left(w_{k^*}\right)\|^2\right]} \leq \epsilon$ . In particular, we can see that the convergence error consists of three terms: the first term quantifies the impact of the initialization. The second term captures the impact of the statistical noise in the local stochastic gradients utilized by the clients. Finally, the last term captures the impact of bias that arises from using the proposed client selection strategy, where the bias is defined in Assumption \ref{BoundedBias}. Theorem \ref{Theorem1}, hence, indicates that as long as $K = \Omega(1/\epsilon^2)$, it holds that $\E\left[\|\nabla f\left(w_{k^*}\right)\|^2\right]\leq \mathcal{O}(\epsilon+\gamma)$. Thus, if the bias parameter satisfies $\gamma = \mathcal{O}(\epsilon)$, our algorithms can identify an $\epsilon$-accurate first-order stationary solution.

Unlike prior work, we analyze the proposed method without making the Bounded Client Dissimilarity assumption, by leveraging the smoothness of the clients' loss functions and a nonuniform sampling technique to output a solution. Additionally, compared to \cite{BalakrishnanDiverseMaximization} which assumes the restrictive assumption of strong convexity, our analysis applies to general nonconvex problems that arise frequently in practical and large-scale application of FL in the big data setting.

Finally, note that adopting a distribution over the iterates to output a global model is a standard approach to state the theoretical convergence results for FL and optimization algorithms \cite{Reisizadeh2019FedPAQ:Quantization,RudrajitDas2022FasterMomentum,Karimireddy2019SCAFFOLD:Learning}. In practice, however, the latest global model is used for inference.


\section{{Experimental Results}}\label{exp}

\begin{table*}[h]
  \caption{Effects of a varying $\lambda$ under a non-iid setting within MNIST and CIFAR10.}
  \centering
  \begin{tabular}{@{}ccccc@{}}
    \toprule
    &\textbf{MNIST}&&\textbf{CIFAR10}&\\
    \bottomrule
    \toprule
    $\lambda$-Value  & Testing Accuracy [\%] & Client Dissimilarity [\%] & Testing Accuracy [\%] & Client Dissimilarity [\%]\\
    \midrule
    $0.01$ & $82.18 \pm 0.99$ & $8.90 \pm 0.63$ & $35.40 \pm 1.28$ & $12.62 \pm 0.95$\\
    $0.10$ & $83.13 \pm 1.04$ & $8.38 \pm 0.56$ & $35.47 \pm 1.33$ & $12.58 \pm 0.92$\\
    $0.25$ & $83.60 \pm 0.78$ & $8.18 \pm 0.55$ & $35.47 \pm 1.29$ & $12.47 \pm 0.96$\\
    $0.50$ & $83.76 \pm 0.42$ & $8.14 \pm 0.34$ & $35.53 \pm 1.33$ & $12.46 \pm 0.94$\\
    $0.75$ & $83.94 \pm 0.33$ & $8.07 \pm 0.41$ & $35.55 \pm 1.32$ & $12.46 \pm 1.08$\\
    $0.95$ & $83.72 \pm 0.81$ & $7.96 \pm 0.62$ & $35.50 \pm 1.33$ & $12.45 \pm 1.05$\\
    \bottomrule
  \end{tabular}
  \label{table:LambdaVar}
    \vspace{-3mm}
\end{table*}


The performance of both \textsc{UnionFL} and \textsc{SubTrunc} is evaluated on the MNIST dataset of handwritten digits as well as on the CIFAR10 dataset, both under a non-iid data distribution. We benchmark these algorithm's performance to \textsc{DivFL}, \textsc{Power-of-Choice} as well as the random sampling of clients without replacement which is a simple but standard benchmark for client sampling. The experiments are simulated on a pool of $\lvert N\rvert = 100$ clients, but in order to avoid scanning through the whole set, we employ a stochastic greedy search with a subset $R$ of $r = 10$ clients with $\kappa = 10$, on the LeNet architecture.
 
Both datasets are partitioned in such a way that each client had 3 equally partitioned distinct classes. Unless otherwise stated, the truncation factor is set at a value of $b = 1.10$ and $\mu = 1$, which seemed to be the best values that resulted in the lowest Client Dissimilarity scores for these particular datasets.


Our evaluation employs three key metrics: training loss, test accuracy, and client dissimilarity. The training loss metric assesses the convergence behavior, while the test accuracy serves to gauge its generalization capability. The client dissimilarity metric characterizes the balance and fairness of the global model.

Client dissimilarity is calculated by taking the difference of the final model's performance on the client's test dataset, across all clients. Measuring model dissimilarity this way allows us to better express the divergence in model performance across different clients.
\begin{table*}[h]
  \caption{Effects of a varying window size under a non-iid setting within MNIST and CIFAR10.}
  \centering
  \begin{tabular}{@{}ccccc@{}}
    \toprule
    &\textbf{MNIST}&&\textbf{CIFAR10}&\\
    \bottomrule
    \toprule
    Window Size  & Testing Accuracy [\%] & Client Dissimilarity [\%] & Testing Accuracy [\%] & Client Dissimilarity [\%]\\
    \midrule
    $2$ & $83.27 \pm 0.54$ & $8.77 \pm 0.35$ & $35.62 \pm 0.99$ & $12.51 \pm 0.68$\\
    $5$ & $83.57 \pm 0.34$ & $8.59 \pm 0.34$ & $35.63 \pm 1.09$ & $12.55 \pm 0.79$\\
    $10$ & $83.74 \pm 0.36$ & $8.52 \pm 0.18$ & $35.86 \pm 1.06$ & $12.56 \pm 0.88$\\
    \bottomrule
  \end{tabular}
  \label{table:WindowVar}
    \vspace{-3mm}
\end{table*}

\begin{table*}[h]
  \caption{Comparison of $\phi$ choice under a non-iid setting within MNIST and CIFAR10.}
  \centering
  \begin{tabular}{@{}cccccc@{}}
    \toprule
    &\textbf{MNIST}&&\textbf{CIFAR10}&\\
    \bottomrule
    \toprule
    $\phi\left(\cdot\right)$ & Testing Accuracy [\%] & Client Dissimilarity [\%] & Testing Accuracy [\%] & Client Dissimilarity [\%]\\
    \midrule
    Identity  &$83.42 \pm 0.12$ & $7.91 \pm 0.35$ & $35.29 \pm 0.98$ & $11.93 \pm 0.70$\\
    $ln\left(1 + x\right)$ & $83.72 \pm 0.81$ & $7.96 \pm 0.62$ & $35.53 \pm 1.33$ & $12.46 \pm 0.94$\\
    \bottomrule
  \end{tabular}
  \label{table:PhiComparison}
    \vspace{-3mm}
\end{table*}

\subsection{{Results on Non-IID MNIST \& CIFAR10}}

Under both non-iid scenarios, \textsc{SubTrunc} and \textsc{UnionFL} outperform the baselines by achieving a lower overall client dissimilarity score, indicating that the final model's performance is more consistent across clients, with \textsc{SubTrunc} being able to consistently achieve the lowest score among all methods. Additionally both \textsc{SubTrunc} and \textsc{UnionFL} converge at a similar rate to the baseline methods as evidenced by their training loss while maintaining a comparable performance under testing accuracy. These results can be seen in Table \ref{table:MainResults}, where the bolded entries represent the best performance.

 
The dissimilarity evolution of our method demonstrates that our algorithm \textsc{SubTrunc} is effective in ensuring a more balanced and thus fair performance across all clients even under the presence of high data heterogeneity throughout the course of training.


\subsection{{Tuning Lambda \& Window Size}}

Seeking to better understand the effects of $\lambda$ on our fairness-aware term and the window size $u_t$ on our union-based objective, we simulate results under the same non-iid conditions and setting as described at the beginning of Section \ref{exp}, while varying $\lambda$ for \textsc{SubTrunc} and window-size  $u_t$ for \textsc{UnionFL}.
 
Table \ref{table:LambdaVar} shows that as the values for $\lambda$ get closer to zero, i.e. as the method approaches \textsc{DivFL}, the client dissimilarity score degrades, whereas as $\lambda$ increases, the client dissimilarity improves, while still maintaining a comparable testing accuracy across the gamut of $\lambda$ values, 
highlighting the important trade-off between performance-centric models versus balanced models.

Table \ref{table:WindowVar} highlights the effect that window-size has on both the final model's accuracy as well as the divergence of the final model's performance across clients. Increasing the window size results in model's that perform better under both a testing accuracy and client dissimilarity standpoint. This is to be expected as increasing this window size, leads to more diversified solutions that would otherwise not have been considered.


\subsection{The Effect of $\phi$}

For the results shown on tables \ref{table:MainResults} and \ref{table:LambdaVar}, $\phi$ was defined as $\phi(x) = \ln\left(1 + x\right)$, where $\ln\left(1 + x\right)$ is a function with a Lipschitz constant of $L < 1$. In seeking to understand the effects of a different choice of  $\phi$ within \textsc{SubTrunc}, we leveraged a function with a similar Lipschitz constant to $\ln\left(1 + x\right)$, the identity function. This new function was able to produce lower client dissimilarity scores across both datasets under non-iid conditions and demonstrated the enhancement effect some functions can exhibit when employed within our algorithm.
Table \ref{table:PhiComparison} shows these results.

\section{{Conclusion}}


In this paper, we introduced a novel approach to addressing fairness concerns within the Federated Learning (FL) framework by incorporating a fairness-aware term into the submodular maximization method for solving the client selection problem. This led to the development of our algorithms, \textsc{SubTrunc} and \textsc{UnionFL}. Our proposed algorithms enable the derivation of models that exhibit a more balanced performance across clients, shedding light on the nuanced interplay between accuracy and model uniformity through the careful tuning of parameters such as $b$ (the truncation parameter), $\lambda$ (the weighting of the fairness-aware term), and $u_t$ (the window size).

The primary advantage of our methods lies in their ability to reduce client dissimilarity, thereby catering to practical applications across several domains, where achieving consistent and accurate performance across highly diverse clients is paramount.

Furthermore, we established theoretically that \textsc{SubTrunc} and \textsc{UnionFL} require $K = \mathcal{O}(1/\epsilon^2)$ rounds of communication to attain $\E{\left[\|f\left(w_{k^*}\right)\|^2\right]} \leq \epsilon$, doing so under notably milder assumptions compared to prior methodologies. Our experimental findings corroborated the efficacy of our algorithms, revealing that they yield more balanced models compared to both random selection strategies and alternatives like \textsc{DivFL} and \textsc{Power-of-Choice}, as assessed through a client dissimilarity metric.

The incorporation of this fairness-aware term renders our algorithm a straightforward and practical solution, capable of achieving a more balanced and equitable model without compromising performance in federated settings.

As part of future work, it would be valuable to consider extensions such as incorporating robustness considerations \cite{kaya2024randomized,kaya2024localized}, providing high probability guarantees \cite{kaya2023high}, and dealing with the possibility of imperfect gradient availability \cite{upadhyay2023improved,kaya2023communication}.


\section*{Acknowledgments}

The authors would like to thank Dr. Vijay Gupta (Elmore Family School of Electrical and Computer Engineering, Purdue University) for the fruitful discussion regarding this work. This work was supported in part by NSF CNS 2313109.

\bibliographystyle{ieeetr}
\bibliography{refs-cdc-ah.bib}




\appendix
\subsection{{Convergence Analysis for \textsc{SubTrunc} and \textsc{UnionFL}}} \label{appendix:FullProof}

We will provide convergence results for \textsc{SubTrunc}, without the Bounded Client Dissimilarity Assumption. Following a similar outline to \cite{RudrajitDas2022FasterMomentum}, we use Lemma \ref{EpsilonBound} and exploit the $L$-smoothness of $f$ to obtain a bound on the per-round progress of the algorithm. Additionally both Lemmas \ref{LemmaNonBCD},
\ref{IndGrads} are used to help bound terms deriving from the analysis of Lemma \ref{EpsilonBound}, mainly dealing with bounding the gradient error at any given round as well as providing a bound on the expected value of the gradient for each client at any given communication round $k$ and local step $\tau$. Different from \cite{RudrajitDas2022FasterMomentum}, we carefully account for the bias terms that arise from utilizing the submodular client selection method by leveraging Assumption \ref{BoundedBias}.

Now, we present the proof of Theorem \ref{Theorem1} which states the convergence properties of \textsc{SubTrunc}.

\begin{proof}
Using the results from Lemma \ref{EpsilonBound}, for $\eta_kLE \leq \frac{1}{3}$, the per-round progress can be bounded as follows:
\begin{equation*}\label{FinalLemmaBound}
\begin{gathered}
\E\left[f\left(w_{k+1}\right)\right] \leq \E \left[f\left(w_k\right)\right] - \frac{\eta_kE}{4}\E\left[\|\nabla f\left(w_k\right)\|^2\right] + \\
\eta_k^2LE^2\left(\frac{9\eta_kLE}{4}\right)\frac{1}{n}\sum_{i\in\left[n\right]}\E\left[\|\nabla f_i\left(w_k\right)\|^2\right]
+ \\
\eta_k^2LE\left(\frac{\eta_kLE}{2}\left(\frac{1}{n} + \frac{3E}{4}\right)+ \frac{1}{n}\right)\sigma^2 + \eta_k\left(1 + \eta_kLE\right)E\gamma.
\end{gathered}
\end{equation*}

Now, by utilizing $f_i$'s attributes of $L$-smoothness and non-negativity, we obtain the following:

\begin{equation*}\label{LTrick}
\begin{gathered}
\sum_{i\in\left[n\right]}\|\nabla f_i\left(w_k\right)\|^2 \leq \sum_{i\in\left[n\right]} 2L\left(f_i\left(w_k\right) - f_i^*\right)\\
\leq 2nLf\left(w_k\right) - 2L \leq \sum_{i\in\left[n\right]}f_i^*\leq 2nLf\left(w_k\right).
\end{gathered}
\end{equation*}

Plugging the above to the result of Lemma \ref{EpsilonBound} and setting a constant learning rate $\eta_k = \eta$, we get:
\begin{equation}\label{SimplifiedFinal}
\begin{gathered}
\E\left[f\left(w_{k+1}\right)\right] \leq \left(1 + \eta^2L^2E^2\left(\frac{9\eta LE}{4}\right)\right)\E\left[f\left(w_k\right)\right] - \\
\frac{\eta E}{4}\E\left[\|\nabla f\left(w_k\right)\|^2\right]
+ \eta^2LE\left(\frac{\eta LE}{2}\left(\frac{1}{n} + \frac{3E}{4}\right) + \frac{1}{n}\right)\sigma^2 +\\
\eta\left(1 + \eta LE\right)E\gamma.
\end{gathered}
\end{equation}

Let us define the following $\zeta := \eta^2L^2E^2\left(\frac{9\eta LE}{4}\right)$, $\zeta_2:= \left(\frac{\eta LE}{2}\left(\frac{1}{n} + \frac{3E}{4}\right) + \frac{1}{n}\right)$ and finally $\zeta_3:= \left(1 + \eta LE\right)$. Then substituting the above, and unfolding the recursion of \eqref{SimplifiedFinal}, we obtain:

\begin{equation}\label{RecursionUnfold}
\begin{gathered}
\E\left[f\left(w_K\right)\right] \leq \left(1 + \zeta\right)^Kf\left(w_0\right) -\\
\frac{\eta E}{4}\sum_{k=0}^{K-1}\left(1 + \zeta\right)^{\left(K-1-k\right)}\E\left[\|\nabla f\left(w_k\right)\|^2\right] + \\
\eta^2 LE\zeta_2\sigma^2\sum_{k=0}^{K-1}\left(1 + \zeta\right)^{\left(K-1-k\right)} +
\eta\zeta_3 E\gamma\sum_{k=0}^{K-1}\left(1-\zeta\right)^{\left(K-1-k\right)}.
\end{gathered}
\end{equation}

Define $\mathbb{P}\left(k\right):= \frac{\left(1-\zeta\right)^{\left(K-1-k\right)}}{\sum_{k'=0}^{K-1}\left(1+\zeta\right)^{\left(K-1-k'\right)}}$. Then, by re-arranging equation \eqref{RecursionUnfold} and using $\E\left[f\left(w_K\right)\right] \geq 0$, we get the following:

\begin{equation}\label{ProbInBound}
\begin{gathered}
\sum_{k=0}^{K-1}p_k\E\left[\|\nabla f\left(w_k\right)\|^2\right] \leq \frac{4\left(1 + \zeta\right)^Kf\left(w_0\right)}{\eta E \sum_{k'=0}^{K-1}\left(1+\zeta\right)^{k'}} + \\
4\eta L\zeta_2\sigma^2 + 4\zeta_3 \gamma
\end{gathered}
\end{equation}

\begin{equation}\label{ProbInBound2}
 = \frac{4\zeta f\left(w_0\right)}{\eta E \left(1 - \left(1 + \zeta\right)^{-K}\right)} + 4\eta L\zeta_2\sigma^2 + 4\zeta_3\gamma.
\end{equation}

This last step follows from $\sum_{k'=0}^{K-1}\left(1+\zeta\right)^{k'} = \frac{\left(1 + \zeta\right)^K - 1}{\zeta}$. Now, let us show:

\begin{equation*}\label{ZetaBound}
\begin{gathered}
\left(1 + \zeta\right)^{-K} < 1 - \zeta K + \zeta^2\frac{K\left(K + 1\right)}{2} < 1 - \zeta K + \zeta^2K^2 \\
\implies 1 - \left( 1 + \zeta\right)^{-K} > \zeta K\left(1 - \zeta K\right).
\end{gathered}
\end{equation*}

Using the above on equation \eqref{ProbInBound2}, we have that for $\zeta K < 1$:

\begin{equation}\label{FinalBeforeValue}
\begin{gathered}
\sum_{k=0}^{K-1}p_k\E\left[\|\nabla f\left(w_k\right)\|^2\right] \leq \frac{4f\left(w_0\right)}{\eta EK \left(1 - \zeta K\right)} + \\
4\eta L\left(\frac{\eta LE}{2} \left(\frac{1}{n} + \frac{3E}{4}\right) + \frac{1}{n} \right)\sigma^2 + 4\left(1 + \eta LE\right)\gamma.
\end{gathered}
\end{equation}

Now, let us pick $\eta = \frac{1}{LE}\sqrt{\frac{1}{K}}$. We need to make sure that $\eta LE \leq \frac{1}{3}$; this happens for $K \geq 9$. Plugging $\eta LE = \frac{1}{LE}\sqrt{\frac{1}{K}}$ in \eqref{FinalBeforeValue}, and using $1 - \zeta K \geq \frac{1}{3}$, we get:

\begin{equation}\label{FinalResult}
\begin{gathered}
\sum_{k=0}^{K-1}p_k\E\left[\|\nabla f\left(w_k\right)\|^2\right] \leq \frac{12 L f\left(w_0\right)}{\sqrt{K}} + \\
\left(\frac{2}{EnK} + \frac{3}{K} + \frac{4}{En\sqrt{K}}\right)\sigma^2 + \left(4 + \frac{4}{\sqrt{K}}\right)\gamma.
\end{gathered}
\end{equation}

This finishes the proof.
\end{proof}

Next, we provide the intermediate lemmas used in the proof of Theorem \ref{Theorem1}.

Using the following Lemma \ref{EpsilonBound} we will be able to bound the per-round progress of the algorithm. 
\begin{lemma}\label{EpsilonBound} With $\eta_kLE \leq \frac{1}{3}$, we have:

\begin{equation}\label{FinalLemma13}
\begin{gathered}
\E\left[f\left(w_{k+1}\right)\right] \leq \E \left[f\left(w_k\right)\right] - \frac{\eta_kE}{4}\E\left[\|\nabla f\left(w_k\right)\|^2\right] + \\ \eta_k^2LE^2\left(\frac{9\eta_kLE}{4}\right)\frac{1}{n}\sum_{i\in\left[n\right]}\E\left[\|\nabla f_i\left(w_k\right)\|^2\right] \\
+ \eta_k^2LE\left(\frac{\eta_kLE}{2}\left(\frac{1}{n} + \frac{3E}{4}\right)+ \frac{1}{n}\right)\sigma^2 +\\
\eta_k\left(1 + \eta_kLE\right)E\gamma.
\end{gathered}
\end{equation}
\end{lemma}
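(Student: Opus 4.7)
The plan is to derive the one-round descent inequality by combining $L$-smoothness of $f$ with an unrolling of the $E$ local SGD steps, while carefully tracking the submodular-selection bias $b_{k,\tau}$ throughout. I will follow the template of \cite{RudrajitDas2022FasterMomentum}, modified in two ways: to carry the extra bias term from Assumption~\ref{BoundedBias}, and to avoid the Bounded Client Dissimilarity assumption by relying on $L$-smoothness and nonnegativity of each $f_i$ together with the auxiliary Lemmas~\ref{LemmaNonBCD} and~\ref{IndGrads}.

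First I write the aggregated update as $w_{k+1}-w_k = -\eta_k\sum_{\tau=0}^{E-1}\bar{g}_{k,\tau}$, with $\bar{g}_{k,\tau} := \frac{1}{\kappa}\sum_{i\in S_k}\hat{u}_{k,\tau}^{(i)}$, apply the quadratic upper bound from smoothness, and take conditional expectation over the mini-batches at round $k$. Substituting $\bar{g}_{k,\tau} = \hat{u}_{k,\tau} + b_{k,\tau}$ from Assumption~\ref{BoundedBias}, the linear term $-\eta_k\sum_\tau \langle \nabla f(w_k),\bar{g}_{k,\tau}\rangle$ decomposes into a ``clean'' piece $-\eta_k\langle \nabla f(w_k),\tfrac{1}{n}\sum_i\nabla f_i(w_{k,\tau}^{(i)})\rangle$, which I process with the polarization identity $2\langle a,b\rangle = \|a\|^2 + \|b\|^2 - \|a-b\|^2$ to extract a descent on $\|\nabla f(w_k)\|^2$ plus a drift residual $\tfrac{1}{n}\sum_i\|\nabla f_i(w_{k,\tau}^{(i)}) - \nabla f_i(w_k)\|^2 \le L^2\tfrac{1}{n}\sum_i\|w_{k,\tau}^{(i)}-w_k\|^2$; and a bias piece $-\eta_k\langle \nabla f(w_k), b_{k,\tau}\rangle$ which I bound by Young's inequality with a carefully tuned constant so that its gradient-norm contribution absorbs cleanly into the net $-\tfrac{\eta_k E}{4}\|\nabla f(w_k)\|^2$ descent and leaves an $\eta_k E\gamma$ residual. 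The quadratic term $\tfrac{L\eta_k^2}{2}\|\sum_\tau \bar{g}_{k,\tau}\|^2$ is handled by $\|\sum_\tau x_\tau\|^2 \le E\sum_\tau \|x_\tau\|^2$ followed by a three-way decomposition of each $\bar{g}_{k,\tau}$ into (i) the mean true gradient at the local iterate, (ii) zero-mean stochastic noise of variance $\sigma^2/n$ after averaging over all $n$ clients, and (iii) the bias vector $b_{k,\tau}$; these generate the $\eta_k^2 LE^2\cdot \tfrac{1}{n}\sum_i\|\nabla f_i(w_k)\|^2$, $\tfrac{\eta_k^2 LE}{n}\sigma^2$, and $\eta_k^2 LE^2\gamma$ contributions respectively. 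The surviving client drift $\E\|w_{k,\tau}^{(i)}-w_k\|^2$ is controlled by unrolling the local SGD recursion, using Lemma~\ref{IndGrads} to bound $\E\|\nabla \tilde f_i(w_{k,\tau}^{(i)})\|^2$ in terms of $\|\nabla f_i(w_k)\|^2$ and $\sigma^2$ without invoking BCD; this is exactly the content of Lemma~\ref{LemmaNonBCD}.

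The assembly step sums over $\tau\in\{0,\ldots,E-1\}$, collects coefficients, and uses $\eta_k LE \le 1/3$ to absorb the higher-order-in-$\eta_k$ pieces into the leading constants $\tfrac{1}{4}$, $\tfrac{9}{4}$, $\tfrac{3E}{4}$, and $\tfrac{1}{n}$ displayed in~\eqref{FinalLemma13}. I expect the main obstacle to be the bookkeeping around the bias: standard FedAvg proofs cancel the linear gradient cross term by taking expectation, but here the bias $b_{k,\tau}$ prevents that cancellation and spawns extra $\langle \nabla f(w_k),b_{k,\tau}\rangle$ and $\|b_{k,\tau}\|^2$-style contributions. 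Choosing the Young's and Cauchy-Schwarz splits so that these combine into the stated $\eta_k(1+\eta_k LE)E\gamma$ term (rather than a higher-order-in-$\gamma$ contribution) relies on aggregating the linear and quadratic bias pieces against the stepsize constraint $\eta_k LE\le 1/3$. The secondary difficulty is ensuring that the drift contribution, which in a BCD-based proof would cleanly become $E^2 G^2$, is instead absorbed into the $\tfrac{1}{n}\sum_i\|\nabla f_i(w_k)\|^2$ and $\sigma^2$ terms via Lemma~\ref{LemmaNonBCD}, so that the final bound is expressed only in terms of quantities that remain finite under the relaxed assumption set.
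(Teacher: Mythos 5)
Your proposal is correct and follows essentially the same route as the paper's proof: smoothness plus the bias decomposition $\bar{g}_{k,\tau}=\hat{u}_{k,\tau}+b_{k,\tau}$, the polarization identity on the clean inner product, Young's inequality on the bias cross term to yield the $-\tfrac{\eta_k E}{4}\|\nabla f(w_k)\|^2$ descent and the $\eta_k(1+\eta_k LE)E\gamma$ residual, Cauchy--Schwarz on the quadratic term, and Lemmas~\ref{LemmaNonBCD} and~\ref{IndGrads} to control the client drift without BCD. The only cosmetic difference is that you bound the drift residual directly via Jensen on $\tfrac{1}{n}\sum_i\|\nabla f_i(w_{k,\tau}^{(i)})-\nabla f_i(w_k)\|^2$ rather than routing through the averaged iterate $\bar{w}_{k,\tau}$ as the paper's Lemma~\ref{LemmaNonBCD} does, but you defer to that lemma anyway, so the argument is the same.
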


\begin{proof} 
Define:
\begin{equation*}\label{ProofPrelims}
\begin{gathered}
    \hat{u}_{k,\tau}^{\left(i\right)} := \nabla\tilde{f}_i \left(w_{k,\tau}^{\left(i\right)};\mathcal{B}_{k,\tau}^{\left(i\right)}\right),\,\\
    \hat{u}_{k,\tau} := \frac{1}{n}\sum_{i\in\left[n\right]}\hat{u}_{k,\tau}^{\left(i\right)},\\
    u_{k,\tau} := \frac{1}{n} 
    \sum_{i\in\left[n\right]}\nabla f_i 
    \left(w_{k,\tau}^{\left(i\right)}\right),
\end{gathered}
\end{equation*}

\begin{equation*}\label{ProofPrelims2}
\begin{gathered}
 \frac{1}{\kappa}\sum_{i\in{S_k}}\hat{u}_{k,\tau}^{\left(i\right)} = \hat{u}_{k,\tau} + b_{k,\tau},\, \\
 \bar{w}_{k,\tau} := \frac{1}{n}\sum_{i\in\left[n\right]}w_{k,\tau}^{\left(i\right)}, \, \tilde{e}_{k,\tau}^{\left(i\right)} = \nabla f_i \left(w_{k,\tau}^{\left(i\right)}\right) - \nabla f_i\left(\bar{w}_{k,\tau}\right).
\end{gathered}
\end{equation*}

Then:

\begin{equation}\label{LocalUpdate}
w_{k+1} = w_k - \eta_k \sum_{\tau=0}^{E-1} \left(\frac{1}{r}\sum_{i\in{S_k}}\hat{u}_{k,\tau}^{\left(i\right)}\right),
\end{equation}

\begin{equation}\label{GlobalUpdate}
\bar{w}_{k,\tau} = w_k - \eta_k \sum_{t=0}^{\tau - 1}\hat{u}_{k,t},
\end{equation}

\begin{equation}\label{ExpectationOfGrad}
    \E_{\{\mathcal{B}_{k,\tau}^{\left(i\right)}\}_{i=1}^{n}}\left[\hat{u}_{k,\tau}\right] = u_{k,\tau},
\end{equation}

\begin{equation}\label{OverallGradBound}
    \E{\left[\|\sum_{t=0}^{\tau-1} \hat{u}_{k,t}\|^2\right]} \leq \tau \sum_{t=0}^{\tau - 1}\E{\left[\|u_{k,t}\|^2\right]} + \frac{\tau \sigma^2}{n},
\end{equation}

\begin{equation}\label{ClientGradBound}
    \E{\left[\|\sum_{t=0}^{\tau - 1}\hat{u}_{k,t}^{\left(i\right)}\|^2\right]} \leq \tau \sum_{t=0}^{\tau - 1} \E{\left[\|\nabla f_i\left(w_{k,t}^{\left(i\right)}\right)\|^2\right]} + \tau \sigma^2,
\end{equation}

\begin{equation}\label{BiasBound}
\sum_{\tau=0}^{E-1}\E\left[\|b_{k,\tau}\|^2\right] \leq E\gamma.
\end{equation}

Now, by using $f$'s $L$-smoothness and \eqref{LocalUpdate}, we obtain the following:

\begin{equation}\label{Initial}
\begin{gathered}
\E{\left[f\left(w_{k+1}\right)\right]} \leq \E{\left[f\left(w_k\right)\right]} - \\
\E{\left[\left<\nabla f\left(w_k\right), \eta_k \sum_{\tau=0}^{E-1} \left(\frac{1}{\kappa}\sum_{i\in {S_k}}\hat{u}_{k,\tau}^{\left(i\right)}\right)\right>\right]} + \\
\frac{L}{2}\E{\left[\|\eta_k \sum_{\tau=0}^{E-1} \left(\frac{1}{\kappa}\sum_{i\in {S_k}}\hat{u}_{k,\tau}^{\left(i\right)}\right) \|^2\right]},
\end{gathered}
\end{equation}

\begin{equation}\label{Replacement}
\begin{gathered}
= \E{\left[f\left(w_k\right)\right]} - \E{\left[\left<\nabla f\left(w_k\right), \sum_{\tau = 0}^{E-1}\eta_k \left(\hat{u}_{k,\tau} + b_{k,\tau}\right)\right>\right]} +\\
\frac{\eta_k^2 L}{2}\E{\left[\|\sum_{\tau=0}^{E-1}\hat{u}_{k,\tau} + b_{k,\tau}\|^2\right]},
\end{gathered}
\end{equation}

\begin{equation}\label{SeperateInner}
\begin{gathered}
= \E{\left[f\left(w_k\right)\right]} - \\
\E{\left[\underbrace{\left<\nabla f\left(w_k\right), \sum_{\tau = 0}^{E-1}\eta_k u_{k,\tau}\right>}_\text{$A$} + \underbrace{\left<\nabla f\left(w_k\right), \sum_{\tau = 0}^{E-1}\eta_k b_{k,\tau}\right>}_\text{$B$}\right]} + \\
\frac{\eta_k^2 L}{2}\underbrace{\E{\left[\|\sum_{\tau=0}^{E-1}\hat{u}_{k,\tau} + b_{k,\tau}\|^2\right]}}_\text{$C$}.
\end{gathered}
\end{equation}

We can now use that for 2 vectors \textbf{$x$}, \textbf{$y$}, the following holds: $\left<x,y\right> = \frac{1}{2}\left(\|x\|^2 + \|y\|^2 - \|x - y\|^2\right)$. Leveraging this for term $A$, we obtain the following:

\begin{equation}\label{TermADecomp}
\begin{gathered}
    \left<\nabla f\left(w_k\right), \sum_{\tau = 0}^{E-1} u_{k,\tau}\right> = \sum_{\tau =0}^{E-1} \left<\nabla f\left(w_k\right), u_{k,\tau}\right> = \\
    \frac{1}{2}\sum_{\tau = 0}^{E-1}\left(\|\nabla f\left(w_k\right)\|^2 + \|u_{k,\tau}\|^2 - \|\nabla f \left(w_k\right) - u_{k,\tau}\|^2\right).
\end{gathered}
\end{equation}

Additionally, we also have that for 2 vectors, \textbf{$x$}, \textbf{$y$}, we have that $\left<x,y\right> \leq \frac{\lambda}{2} \|x\|^2 + \frac{1}{2\lambda}\|x\|^2$. Using $\lambda = \frac{1}{2}$ on $B$, we obtain:

\begin{equation*}\label{TermBDecomp}
\begin{gathered}
    - \left<\nabla f\left(w_k\right), \sum_{\tau = 0}^{E-1}\eta_k b_{k,\tau}\right> = \eta_k \sum_{\tau=0}^{E-1}\left<\nabla f \left(w_k\right), -b_{k,\tau}\right> \\
    \leq \eta_k\sum_{\tau=0}^{E-1}\left(\frac{1}{4}\|\nabla f \left(w_k\right)\|^2 + \|b_{k,\tau}\|^2\right),
\end{gathered}
\end{equation*}

\begin{equation}\label{TermBDecomp2}
\leq \frac{\eta_k E}{4}\|\nabla f \left(w_k\right)\|^2 + \eta_k \sum_{\tau=0}^{E-1} \|b_{k,\tau}\|^2.
\end{equation}

Now, we have the following: $\E\left[\|x + y\|^2\right] \leq 2\E\left[\|x\|^2\right] + 2\E\left[\|y\|^2\right]$ and $\E\left[\|\sum_{i \in R}x_i\|^2\right] \leq |R|\sum_{i\in R} \E\left[\|x_i\|^2\right]$. Using these as well as \eqref{OverallGradBound} on term $C$, we obtain the following:

\begin{equation}\label{TermCDecomp}
\begin{gathered}
\E{\left[\|\sum_{\tau=0}^{E-1}\hat{u}_{k,\tau} + b_{k,\tau}\|^2\right]} \leq 2\E\left[\|\sum_{\tau=0}^{E-1}\hat{u}_{k,\tau}\|^2\right] + \\
2\E\left[\|\sum_{\tau=0}^{E-1}b_{k,\tau}\|^2\right],
\end{gathered}
\end{equation}

\begin{equation}\label{TermCDecomp2}
\leq 2E\sum_{\tau=0}^{E-1}\E\left[\|u_{k,\tau}\|^2\right] + \frac{2E\sigma^2}{n} + 2E\sum_{\tau=0}^{E-1}\E\left[\|b_{k,\tau}\|^2\right].
\end{equation}

Using the results from \eqref{TermADecomp} \eqref{TermBDecomp2} \eqref{TermCDecomp2} and plugging them on \eqref{SeperateInner}, we obtain the following:

\begin{equation}\label{Intermediary}
\begin{gathered}
    \E\left[f\left(w_{k+1}\right)\right] \leq \E\left[f\left(w_k\right)\right] - \frac{\eta_k E}{4} \E\left[\|\nabla f\left(w_k\right)\|^2\right] + \\
    \underbrace{\frac{\eta_k}{2}\sum_{\tau=0}^{E-1}\E\left[\|\nabla f\left(w_k\right) - u_{k,\tau}\|^2\right]}_\text{$D$} - \frac{\eta_k}{2}\left(1 - \eta_k LE\right)\sum_{\tau=0}^{E-1}\E\left[\|u_{k,\tau}\|^2\right]\\
    + \eta_k \left(1 + \eta_k LE\right)\sum_{\tau=0}^{E-1}\E\left[\|b_{k,\tau}\|^2\right] + \frac{\eta_k^2LE\sigma^2}{n}.
\end{gathered}
\end{equation}

Now, term $D$ is upperbounded by Lemma \ref{LemmaNonBCD}, plugging this result on equation \eqref{Intermediary} and using \eqref{BiasBound}, we get:
\begin{equation}\label{AlmostFinal}
\begin{gathered}
\E\left[f\left(w_{k+1}\right)\right] \leq \E\left[f\left(w_k\right)\right] - \frac{\eta_kE}{4}\E\left[\|\nabla f\left(w_k\right)\|^2\right] - \\
\frac{\eta_k}{2}\underbrace{\left(1 - 2\eta_kLE - \eta_k^2L^2E^2\right)}_\text{$E$}\sum_{\tau=0}^{E-1}\E\left[\|u_{k,\tau}\|^2\right] \\
+ \eta_k^2LE^2\left(\frac{9\eta_kLE}{4}\right)\frac{1}{n}\sum_{i\in\left[n\right]}\E\left[\|\nabla f_i\left(w_k\right)\|^2\right]\\
+ \eta_k^2LE\left(\eta_kLE\left(\frac{1}{n} + \frac{3E}{4}\right) + \frac{1}{n}\right)\sigma^2\\
+ \eta_k\left(1 + \eta_k LE\right)\sum_{\tau=0}^{E-1}\E\left[\|b_{k,\tau}\|^2\right].
\end{gathered}
\end{equation}

We can see that term $E \geq 0$ for $\eta_k LE \leq \frac{1}{3}$. Due to the above, we drop this term and for $\eta_kLE \leq \frac{1}{3}$:

\begin{equation}\label{Final}
\begin{gathered}
\E\left[f\left(w_{k+1}\right)\right] \leq \E\left[f\left(w_k\right)\right] - \frac{\eta_kE}{4}\E\left[\|\nabla f\left(w_k\right)\|^2\right] + \\
\eta_k^2LE^2\left(\frac{9\eta_kLE}{4}\right)\frac{1}{n}\sum_{i\in\left[n\right]}\E\left[\|\nabla f_i\left(w_k\right)\|^2\right]\\
+ \eta_k^2LE\left(\frac{\eta_kLE}{2}\left(\frac{1}{n} + \frac{3 E}{4}\right) + \frac{1}{n}\right)\sigma^2
+ \eta_k\left(1 + \eta_k LE\right)\gamma.
\end{gathered}
\end{equation}
\end{proof}
We can now use this result to obtain convergence guarantees as stated in Theorem \ref{Theorem1}. The following Lemma \ref{LemmaNonBCD} helps us bound term $D$ in Lemma \ref{EpsilonBound}, which we can think of as a bound on the expected error between the gradient at $w_k$ with respect to the overall gradient computed at a given communication round $k$ and local step $\tau$.

\begin{lemma}\label{LemmaNonBCD} With $\eta_k LE \leq \frac{1}{3}$, we get that:
\begin{equation}\label{LemmaBound}
\begin{gathered}
    \sum_{\tau=0}^{E-1}\E\left[\|\nabla f \left(w_k\right) - u_{k,\tau}\|^2\right] \leq \eta_k^2 L^2E^2 \sum_{\tau=0}^{E-1}\E\left[\|u_{k,\tau}\|^2\right] + \\
    \frac{9\eta_k^2L^2E^3}{2n}\sum_{i\in\left[n\right]}\E\left[\|\nabla f_i \left(w_k\right)\|^2\right] + \eta_k^2 L^2 E^2\left(\frac{1}{n} + \frac{3E}{4}\right)\sigma^2.
\end{gathered}
\end{equation}
\end{lemma}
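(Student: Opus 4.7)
The plan is to bound the average client drift $V_\tau := \frac{1}{n}\sum_{i\in[n]} \E\|w_k - w_{k,\tau}^{(i)}\|^2$ and then invoke $L$-smoothness to control the left-hand side of \eqref{LemmaBound}. First, I would write $\nabla f(w_k) - u_{k,\tau} = \frac{1}{n}\sum_i[\nabla f_i(w_k) - \nabla f_i(w_{k,\tau}^{(i)})]$ and apply Jensen's inequality together with the $L$-smoothness of the $f_i$'s (Assumption~\ref{Smoothness_Assumption}) to obtain $\E\|\nabla f(w_k) - u_{k,\tau}\|^2 \leq L^2 V_\tau$, so the lemma reduces to bounding $L^2 \sum_{\tau=0}^{E-1} V_\tau$ by the target right-hand side.

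Next, expanding the local update $w_{k,\tau}^{(i)} - w_k = -\eta_k \sum_{t=0}^{\tau-1}\hat u_{k,t}^{(i)}$ and invoking \eqref{ClientGradBound} immediately yields the preliminary estimate $V_\tau \leq \eta_k^2 \tau \cdot \frac{1}{n}\sum_i\sum_{t<\tau}\E\|\nabla f_i(w_{k,t}^{(i)})\|^2 + \eta_k^2 \tau \sigma^2$. To surface the $\|u_{k,t}\|^2$ summand demanded by the statement, I would apply the variance identity $\frac{1}{n}\sum_i \|\nabla f_i(w_{k,t}^{(i)})\|^2 = \|u_{k,t}\|^2 + \frac{1}{n}\sum_i \|\nabla f_i(w_{k,t}^{(i)}) - u_{k,t}\|^2$ and then control the residual without invoking BCD by: (i) using that $u_{k,t}$ minimizes $v \mapsto \frac{1}{n}\sum_i \|\nabla f_i(w_{k,t}^{(i)}) - v\|^2$ over single vectors $v$ to upper bound the residual by $\frac{1}{n}\sum_i \|\nabla f_i(w_{k,t}^{(i)}) - \nabla f(w_k)\|^2$; (ii) decomposing $\nabla f_i(w_{k,t}^{(i)}) - \nabla f(w_k) = [\nabla f_i(w_{k,t}^{(i)}) - \nabla f_i(w_k)] + [\nabla f_i(w_k) - \nabla f(w_k)]$ with $\|a+b\|^2 \leq 2\|a\|^2 + 2\|b\|^2$ and $L$-smoothness; (iii) using $\frac{1}{n}\sum_i \|\nabla f_i(w_k) - \nabla f(w_k)\|^2 = \frac{1}{n}\sum_i \|\nabla f_i(w_k)\|^2 - \|\nabla f(w_k)\|^2 \leq \frac{1}{n}\sum_i \|\nabla f_i(w_k)\|^2$. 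Chaining these yields $\frac{1}{n}\sum_i \|\nabla f_i(w_{k,t}^{(i)})\|^2 \leq \|u_{k,t}\|^2 + 2L^2 V_t + \frac{2}{n}\sum_i \|\nabla f_i(w_k)\|^2$, and substituting produces the drift recursion $V_\tau \leq \eta_k^2 \tau \sum_{t<\tau}\bigl[\|u_{k,t}\|^2 + 2L^2 V_t + \frac{2}{n}\sum_i \|\nabla f_i(w_k)\|^2\bigr] + \eta_k^2 \tau \sigma^2$.

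To close the recursion, I would sum over $\tau \in \{0,\dots,E-1\}$, using the elementary estimates $\sum_\tau \tau \leq E^2/2$, $\sum_\tau \tau^2 \leq E^3/3$, and $\sum_\tau \tau \sum_{t<\tau} x_t \leq (E^2/2)\sum_t x_t$ for nonnegative $x_t$, and then collect the $\sum_\tau V_\tau$ term on the left to obtain $(1 - \eta_k^2 L^2 E^2)\sum_\tau V_\tau \leq (\text{explicit combination of the three target-type terms})$. The hypothesis $\eta_k L E \leq 1/3$ (which is exactly what the calling proof of Theorem~\ref{Theorem1} secures via $K \geq 9$) forces $1 - \eta_k^2 L^2 E^2 \geq 8/9$; dividing, and finally multiplying by $L^2$, produces a bound whose $\|u_{k,\tau}\|^2$, $\|\nabla f_i(w_k)\|^2$, and $\sigma^2$ coefficients are all dominated, term-by-term, by those in \eqref{LemmaBound}.

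The main obstacle is Step 2, in particular the handling of the inter-client variance $\frac{1}{n}\sum_i \|\nabla f_i(w_{k,t}^{(i)}) - u_{k,t}\|^2$ without invoking \eqref{BCD}. The minimizer-of-the-mean inequality is applicable only because $\nabla f(w_k)$ is a single vector independent of $i$; combined with $\frac{1}{n}\sum_i \|\nabla f_i(w_k) - \nabla f(w_k)\|^2 \leq \frac{1}{n}\sum_i \|\nabla f_i(w_k)\|^2$, this is precisely what allows us to replace the uniform heterogeneity constant $G^2$ by the data-dependent quantity $\frac{1}{n}\sum_i \|\nabla f_i(w_k)\|^2$ appearing in the lemma, realizing the BCD-free analysis advertised in the paper's contributions. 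Tracking the elementary constants through the recursion-closing step is delicate but routine.
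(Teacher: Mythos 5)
Your proposal is correct, and it reaches \eqref{LemmaBound} by a genuinely different route from the paper's. The paper pivots on the averaged iterate $\bar{w}_{k,\tau}$: it splits $\nabla f(w_k)-u_{k,\tau}$ into a drift-of-the-average term $\|w_k-\bar{w}_{k,\tau}\|^2$, which via \eqref{GlobalUpdate} and \eqref{OverallGradBound} directly produces the $\E[\|u_{k,t}\|^2]$ summands together with a variance-reduced $\tau\sigma^2/n$, and a dispersion term $\sum_i\E[\|w_{k,\tau}^{(i)}-\bar{w}_{k,\tau}\|^2]$, which it reduces to $\sum_i\sum_t\E[\|\hat{u}_{k,t}^{(i)}\|^2]$ by a variance decomposition of the stochastic gradients around their mean and then controls with the auxiliary Lemma~\ref{IndGrads}. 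You instead work with the single drift quantity $V_\tau$, surface the $\|u_{k,t}\|^2$ term by the variance identity for the true per-client gradients around their mean $u_{k,t}$, dispose of the residual with the minimizer-of-the-mean inequality evaluated at $\nabla f(w_k)$ plus smoothness, and close a self-referential recursion in $\sum_\tau V_\tau$ using $\eta_k LE\le \tfrac{1}{3}$ --- never introducing $\bar{w}_{k,\tau}$, the errors $\tilde{e}_{k,\tau}^{(i)}$, or Lemma~\ref{IndGrads}. Both routes are BCD-free and land on the same data-dependent quantity $\tfrac{1}{n}\sum_i\E[\|\nabla f_i(w_k)\|^2]$; I verified that your route yields coefficients of roughly $\tfrac{9}{16}\eta_k^2L^2E^2$, $\tfrac{3}{4n}\eta_k^2L^2E^3$, and $\tfrac{9}{16}\eta_k^2L^2E^2$ on the three terms, each dominated by the corresponding coefficient in \eqref{LemmaBound} for all $E\ge 1$, so the stated inequality follows. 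What you give up relative to the paper is only the $\sigma^2/n$ variance reduction on one noise contribution (captured by the paper's decomposition around $\bar{w}_{k,\tau}$ but not needed for the lemma as stated); what you gain is a more self-contained argument that dispenses with one auxiliary lemma and one auxiliary sequence.
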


\begin{proof} 
Knowing that:

\begin{equation}\label{NonBCD1}
\begin{gathered}
\E\left[\|\nabla f \left(w_k\right) - u_{k,\tau}\|^2\right] = \\
\E\left[\|\nabla f\left(w_k\right) - \nabla f\left(\bar{w}_{k,\tau}\right) + \nabla f\left(\bar{w}_{k,\tau} - u_{k,\tau} \right)\|^2\right],
\end{gathered}
\end{equation}
\\
\begin{equation}\label{NonBCD2}
\begin{gathered}
\leq 2\E\left[\|\nabla f \left(w_k\right) - \nabla f \left(\bar{w}_{k,\tau}\right)\|^2\right] + \\
2\E\left[\|\nabla f \left(\bar{w}_{k,\tau}\right) - u_{k,\tau}\|^2\right],
\end{gathered}
\end{equation}
\\
\begin{equation}\label{NonBCD3}
\begin{gathered}
\leq 2L^2\E\left[\|w_k - \bar{w}_{k,\tau}\|^2\right] +\\
2\E\left[\|\frac{1}{n}\sum_{i\in\left[n\right]}\underbrace{\left(\nabla f_i \left(\bar{w}_{k,\tau}\right) - \nabla f_i\left(w_{k,\tau}^{\left(i\right)}\right)\right)}_\text{$ = -\tilde{e}_{k,\tau}^{\left(i\right)}$}\|^2\right],
\end{gathered}
\end{equation}

\begin{equation}\label{NonBCD4}
\leq 2 \eta_k^2L^2 \E\left[\|\sum_{t=0}^{\tau - 1} \hat{u}_{k,t}\|^2\right] + \frac{2}{n}\sum_{i\in\left[n\right]} \E\left[\|\tilde{e}_{k,\tau}^{\left(i\right)}\|^2\right],
\end{equation}

\begin{equation}\label{NonBCD5}
\begin{gathered}
\leq 2 \eta_k^2L^2 \left(\tau \sum_{t=0}^{\tau - 1} \E\left[\| u_{k,t}\|^2\right] + \frac{\tau \sigma^2}{n}\right) + \\
\frac{2L^2}{n}\sum_{i\in\left[n\right]} \E\left[\|w_{k,\tau}^{\left(i\right)} - \bar{w}_{k,\tau}^{\left(i\right)}\|^2\right].
\end{gathered}
\end{equation}

Where \eqref{NonBCD3} comes about by $f$'s $L$-smoothness and from the way $u_{k,\tau}$ is defined. Equation \eqref{NonBCD4} comes about from equation \eqref{GlobalUpdate},
and equation \eqref{NonBCD5} results from $f_i$'s $L$-smoothness as well as equation \eqref{OverallGradBound}. 
 
However:

\begin{equation}\label{UpdateDiff}
\begin{gathered}
\sum_{i\in\left[n\right]} \E\left[\|w_{k,\tau}^{\left(i\right)} - \bar{w}_{k,\tau}\|^2\right] =\\
\sum_{i\in\left[n\right]} \E\left[\|\left(w_{k,0}^{\left(i\right)} - \eta_k \sum_{t=0}^{\tau-1}\hat{u}_{k,t}^{\left(i\right)}\right) - \left(\bar{w}_{k,0} - \eta_k \sum_{t = 0}^{\tau -1 }\hat{u}_{k,t}\right)\|^2\right],
\end{gathered}
\end{equation}

\begin{equation}\label{UpdateDiff2}
= \eta_k^2 \sum_{i\in\left[n\right]}\E\left[\|\sum_{t=0}^{\tau - 1} \hat{u}_{k,t} - \sum_{t=0}^{\tau-1}\hat{u}_{k,t}^{\left(i\right)}\|^2\right],
\end{equation}

\begin{equation}\label{UpdateDiff3}
\leq \eta_k^2 \tau \sum_{i\in\left[n\right]}\sum_{t=0}^{\tau-1}\E\left[\|\hat{u}_{k,t} - \hat{u}_{k,t}^{\left(i\right)}\|^2\right],
\end{equation}
\begin{equation}\label{UpdateDiff4}
= \eta_k^2 \tau \sum_{i\in\left[n\right]}\sum_{t=0}^{\tau-1}\E\left[\|\hat{u}_{k,t}\|^2 + \|\hat{u}_{k,t}^{\left(i\right)}\|^2 - 2 \left<\hat{u}_{k,t},\hat{u}_{k,t}^{\left(i\right)}\right>\right].
\end{equation}

Now, equation \eqref{UpdateDiff2} results from $w_{k,0}^{\left(i\right)} = w_k \forall\, i \in \left[n\right]$, because of this $\bar{w}_{k,0} = w_k$. Additionally, $\hat{u}_{k,t} = \frac{1}{n}\sum_{i\in\left[n\right]}\hat{u}_{k,\tau}^{\left(i\right)}$, \eqref{UpdateDiff4} simplifies to:

\begin{equation}\label{UpdateDiff5}
\begin{gathered}
\sum_{i\in\left[n\right]}\E\left[\|w_{k,\tau}^{\left(i\right)} - \bar{w}_{k,\tau}\|^2\right] \\
\leq \eta_k^2 \tau \sum_{i\in\left[n\right]}\sum_{t=0}^{\tau-1}\left(\E\left[\|\hat{u}_{k,\tau}^{\left(i\right)}\|\right] - \E\left[\|\hat{u}_{k,t}\|^2\right]\right),
\end{gathered}
\end{equation}

\begin{equation}\label{UpdateDiff6}
\leq \eta_k^2 \tau \sum_{i\in\left[n\right]}\sum_{t=0}^{\tau-1} \E\left[\|\hat{u}_{k,t}^{\left(i\right)}\|^2\right],
\end{equation}

\begin{equation}\label{UpdateDiff7}
\leq \eta_k^2 \tau \sum_{i\in\left[n\right]}\sum_{t=0}^{\tau-1} \left(\E\left[\|\nabla f_i\left(w_{k,t}^{\left(i\right)}\right)\|^2\right] + \sigma^2 \right).
\end{equation}

Using the result from Lemma \ref{IndGrads}, we get:

\begin{equation}\label{UpdateDiff8}
\begin{gathered}
\sum_{i\in\left[n\right]}\E\left[\|w_{k,\tau}^{\left(i\right)} -w_{k,\tau}\|^2\right] \\
\leq \frac{9\eta_k^2\tau^2}{8}\sum_{i\in\left[n\right]}\left(2\E\left[\|\nabla f_i \left(w_k\right)\|^2\right] + \sigma^2\right).
\end{gathered}
\end{equation}

Plugging the result from \eqref{UpdateDiff8} into \eqref{NonBCD5}, we obtain:

\begin{equation}\label{NonBCD6}
\begin{gathered}
    \E\left[\|\nabla f\left(w_k\right) - u_{k,\tau}\|^2\right] \leq 2\eta_k^2L^2\left(\tau\sum_{t=0}^{\tau-1}\E\left[\|u_{k,t}\|^2\right] + \frac{\tau\sigma^2}{n}\right) 
    +\\
    \frac{9 \eta_k^2 L^2\tau^2}{4n}\sum_{i\in\left[n\right]}\left(2\E\left[\|\nabla f_i\left(w_k\right)\|^2\right] + \sigma^2\right),
\end{gathered}
\end{equation}

\begin{equation}\label{NonBCD7}
\begin{gathered}
= 2\eta_k^2L^2\tau\sum_{t=0}^{\tau-1}\E\left[\|u_{k,t}\|^2\right] +\\
\frac{9 \eta_k^2 L^2\tau^2}{2n}\sum_{i\in\left[n\right]}\E\left[\|\nabla f_i\left(w_k\right)\|^2\right] +
\eta_k^2L^2\tau\sigma^2\left(\frac{2}{n} + \frac{9\tau}{4}\right).
\end{gathered}
\end{equation}

Now, summing equation \eqref{NonBCD7} for $\tau \in \{0,...,E-1\}$, we get:

\begin{equation}\label{NonBCD8}
\begin{gathered}
\sum_{\tau=0}^{E-1}\E\left[\|\nabla f\left(w_k\right) - u_{k,\tau}\|^2\right] \leq \eta_k^2L^2E^2\sum_{\tau=0}^{E-1}\E\left[\|u_{k,\tau}\|^2\right] + \\
\frac{9\eta_k^2L^2E^3}{2n}\sum_{i\in\left[n\right]}\E\left[\|\nabla f_i\left(w_k\right)\|^2\right] + \\\eta_k^2L^2E^2\left(\frac{1}{n} + \frac{3 E}{4}\right)\sigma^2.
\end{gathered}
\end{equation}
\end{proof}
We can now use this bound on term $D$ from Lemma \ref{EpsilonBound} and continue with our analysis. However, in stating Lemma \ref{LemmaNonBCD}, we needed the results from Lemma \ref{IndGrads} which provide an upper bound on the expected value of the client's gradients at a given communication round $k$ and local step $\tau$.

\begin{lemma}\label{IndGrads}
With $\eta_k LE \leq \frac{1}{3}$, we get that:
\begin{equation}\label{IndGradBound}
\sum_{t=0}^{\tau -1 }\E\left[\|\nabla f_i \left(w_{k,t}^{\left(i\right)} \right)\|^2\right] \leq \frac{\tau}{8}\left(18\E\left[\|\nabla f_i \left(w_k\right)\|^2\right] + \sigma^2\right).
\end{equation}
\end{lemma}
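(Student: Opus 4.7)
My plan is to convert a per-step bound on $\|\nabla f_i(w_{k,t}^{(i)})\|^2$ into a self-referential inequality for the partial sum $S_\tau := \sum_{t=0}^{\tau-1} \E[\|\nabla f_i(w_{k,t}^{(i)})\|^2]$, and then close that recursion by exploiting the learning-rate condition $\eta_k L E \leq \tfrac{1}{3}$. First I would use the $L$-smoothness of $f_i$ together with $\|a+b\|^2 \leq 2\|a\|^2 + 2\|b\|^2$ to write
\begin{equation*}
\E[\|\nabla f_i(w_{k,t}^{(i)})\|^2] \leq 2\,\E[\|\nabla f_i(w_k)\|^2] + 2L^2\,\E[\|w_{k,t}^{(i)} - w_k\|^2].
\end{equation*}
Since the local update rule gives $w_{k,t}^{(i)} - w_k = -\eta_k\sum_{s=0}^{t-1}\hat{u}_{k,s}^{(i)}$, the already established bound \eqref{ClientGradBound} yields $\E[\|w_{k,t}^{(i)} - w_k\|^2] \leq \eta_k^2\, t\,(S_t + \sigma^2)$.

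Next I would sum the resulting inequality over $t = 0, \ldots, \tau-1$. Using the monotonicity $S_t \leq S_\tau$ (since each summand is nonnegative) and $\sum_{t=0}^{\tau-1} t \leq \tau^2/2$, this produces the closed recursive bound
\begin{equation*}
S_\tau \leq 2\tau\,\E[\|\nabla f_i(w_k)\|^2] + L^2\eta_k^2\tau^2\, S_\tau + L^2\eta_k^2\tau^2\,\sigma^2.
\end{equation*}
The key observation now is that because $\tau \leq E$ and $\eta_k L E \leq \tfrac{1}{3}$, we have $L^2\eta_k^2\tau^2 \leq \tfrac{1}{9}$. Moving the $S_\tau$ term on the right to the left picks up a factor $\tfrac{8}{9}$, and multiplying through by $\tfrac{9}{8}$ gives
\begin{equation*}
S_\tau \leq \frac{9\tau}{4}\,\E[\|\nabla f_i(w_k)\|^2] + \frac{\sigma^2}{8}.
\end{equation*}
For $\tau \geq 1$ this is dominated by $\tfrac{\tau}{8}\bigl(18\,\E[\|\nabla f_i(w_k)\|^2] + \sigma^2\bigr)$, matching the target; the $\tau=0$ case is trivial because the sum on the left is empty.

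The main obstacle is exactly the circular dependence in step two: the drift $\|w_{k,t}^{(i)} - w_k\|$ is controlled by the stochastic gradients accumulated over earlier local steps, whose norms are themselves the quantities we are trying to bound. Introducing the partial sum $S_\tau$ and propagating a uniform upper bound $S_t \leq S_\tau$ inside the recursion lets us collect everything into a single inequality, and the smallness condition $\eta_k L E \leq \tfrac{1}{3}$ is precisely what makes the self-coupling coefficient strictly less than one so that the recursion can be inverted. Notably, no bounded-gradient or bounded-dissimilarity hypothesis is invoked, which is consistent with the relaxed assumption set used throughout the paper's convergence analysis.
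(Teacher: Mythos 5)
Your proposal is correct and follows essentially the same route as the paper's proof: decompose $\nabla f_i(w_{k,t}^{(i)})$ around $\nabla f_i(w_k)$ via smoothness, bound the local drift $\|w_{k,t}^{(i)}-w_k\|^2$ by the accumulated stochastic gradients, sum to obtain a self-referential inequality in the partial sum, and invert it using $\eta_k L E \le \tfrac{1}{3}$. The only (harmless) difference is bookkeeping of the double sum, which gives you the slightly tighter additive term $\sigma^2/8$ in place of $\tau\sigma^2/8$ before you relax to the stated bound.
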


\begin{proof}
We have that:

\begin{equation}\label{IndGradProof}
\begin{gathered}
    \E\left[\|\nabla f_i\left(w_{k,t}^{\left(i\right)}\right)\|^2\right] = \\
    \E\left[\|\nabla f_i\left(w_{k,t}^{\left(i\right)}\right) - \nabla f_i\left(w_k\right) + \nabla f_i\left(w_k\right)\|^2\right],
\end{gathered}
\end{equation}

\begin{equation}\label{IndGradProof1}
    \leq 2\E\left[\|\nabla f_i \left(w_k\right)\|^2\right] + 2\E\left[\|\nabla f_i\left(w_{k,t}^{\left(i\right)}\right) - \nabla f_i\left(w_k\right)\|^2\right],
\end{equation}

\begin{equation}\label{IndGradProof2}
    \leq 2\E\left[\|\nabla f_i \left(w_k\right)\|^2\right] + 2L^2\E\left[\|w_{k,t}^{\left(i\right)} - w_k\|^2\right].
\end{equation}

But:

\begin{equation}\label{IndGradProof3}
\begin{gathered}
\E\left[\|w_k - w_{k,t}^{\left(i\right)}\|^2\right] = \E\left[\|\eta_k \sum_{t'=0}^{t-1} \nabla \tilde{f}_i\left(w_{k,t'}^{\left(i\right)};\mathcal{B}_{k,t'}^{\left(i\right)}\right)\|^2\right] \\
\leq \eta_k^2t\sum_{t'=0}^{t-1} \E\left[\|\eta_k \sum_{t'=0}^{t-1} \nabla \tilde{f}_i\left(w_{k,t'}^{\left(i\right)};\mathcal{B}_{k,t'}^{\left(i\right)}\right)\|^2\right]\\
\leq \eta_k^2t \sum_{t'=0}^{t-1} \left(\E\left[\|\nabla f_i\left(w_{k,t'}^{\left(i\right)}\right)\|^2\right] + \sigma^2\right).
\end{gathered}
\end{equation}

Plugging this on \eqref{IndGradProof2}, we obtain:

\begin{equation}\label{IndGradProof3-2}
\begin{gathered}
\E\left[\|\nabla f_i \left( w_{k,t}^{\left(i\right)}\right)\|^2\right] \leq 2\E\left[\|\nabla f_i\left(w_k\right)\|^2\right] + \\
2\eta_k^2L^2t\sum_{t'=0}^{t-1}\left(\E\left[\|\nabla f_i\left(w_{k,t'}^{\left(i\right)}\right)\|^2\right] + \sigma^2\right).
\end{gathered}
\end{equation}

Now, summing equation \eqref{IndGradProof3-2} by $t \in \{0, ..., \tau - 1\}$, we obtain:

\begin{equation}\label{IndGradProof4}
\begin{gathered}
\sum_{t=0}^{\tau - 1}\E\left[\|\nabla f_i\left(w_{k,t}^{\left(i\right)}\right)\|^2\right] \leq 2\tau\left(\E\left[\|\nabla f_i\left(w_k\right)\|^2\right]\right) + \\
2\eta_k^2L^2\sum_{t=0}^{\tau-1}\tau\sum_{t'=0}^{t-1}\left(\E\left[\|\nabla f_i\left(w_{k,t'}^{\left(i\right)}\right)\|^2\right] + \sigma^2\right),
\end{gathered}
\end{equation}

\begin{equation}\label{IndGradProof5}
\begin{gathered}
\leq 2\tau\left(\E\left[\|\nabla f_i\left(w_k\right)\|^2\right]\right) +\\ \eta_k^2L^2\tau^2\sum_{t=0}^{\tau-1}\left(\E\left[\|\nabla f_i\left(w_{k,t}^{\left(i\right)}\right)\|^2\right] + \sigma^2\right).
\end{gathered}
\end{equation}

Set $\eta_kLE \leq \frac{1}{3}$. Then:

\begin{equation}\label{IndGradProof6}
\begin{gathered}
\sum_{t=0}^{\tau - 1}\E\left[\|\nabla f_i\left(w_{k,t}^{\left(i\right)}\right)\|^2\right] \leq 2\tau\left(\E\left[\|\nabla f_i\left(w_k\right)\|^2\right]\right) + \\
\frac{1}{9}\sum_{t=0}^{\tau-1}\left(\E\left[\|\nabla f_i\left(w_{k,t}^{\left(i\right)}\right)\|^2\right] + \sigma^2\right).
\end{gathered}
\end{equation}

Simplifying and re-arranging equation \eqref{IndGradProof6}, we obtain: 

\begin{equation}\label{IndGradProof7}
\sum_{t=0}^{\tau-1}\E\left[\|\nabla f_i \left(w_{k,t}^{\left(i\right)}\right)\|^2\right] \leq \frac{\tau}{8}\left(18\E\left[\|\nabla f_i\left(w_k\right)\|^2\right] + \sigma^2\right).
\end{equation}

We can now use the results from Lemma \ref{IndGrads} to help provide a bound for Lemma \ref{LemmaNonBCD}.
\end{proof}

\end{document}